\definecolor{citecolor}{RGB}{34,139,34}
\newcolumntype{x}[1]{>{\centering\arraybackslash}p{#1pt}}
\newcommand{\app}{\raise.17ex\hbox{$\scriptstyle\sim$}}
\newlength\savewidth\newcommand\shline{\noalign{\global\savewidth\arrayrulewidth
  \global\arrayrulewidth 1pt}\hline\noalign{\global\arrayrulewidth\savewidth}}
\newcommand{\tablestyle}[2]{\setlength{\tabcolsep}{#1}\renewcommand{\arraystretch}{#2}\centering\footnotesize}
\theoremstyle{definition}
\newtheorem{definition}{Definition}
\theoremstyle{claim}
\newtheorem{claim}{Claim}
    \newtheoremstyle{TheoremNum}
        {\topsep}{\topsep}              
        {\itshape}                      
        {}                              
        {\bfseries}                     
        {.}                             
        { }                             
        {\thmname{#1}\thmnote{ \bfseries #3}}
    \theoremstyle{TheoremNum}
    \newtheorem{theorem}{Theorem}
\title{Rethinking preventing class-collapsing in metric learning with margin-based losses }
\author{
Elad Levi$^{1}$, \,\,
Tete Xiao$^{2}$, \,\,
Xiaolong Wang $^{3}$, \,\,
Trevor Darrell$^{1,2}$
\vspace{3pt}\\

$^1$Nexar, $^2$UC Berkeley, $^3$UC San Diego\\
}
\begin{document}



 

\maketitle

\begin{abstract}
\label{sec:abstract}
Metric learning seeks perceptual embeddings where visually similar instances are close and dissimilar instances are apart, but learned representations can be sub-optimal when the distribution of intra-class samples is diverse and distinct sub-clusters are present. Although theoretically with optimal assumptions, margin-based losses such as the triplet loss and margin loss have a diverse family of solutions.
We theoretically prove and empirically show that under reasonable noise assumptions, margin-based losses tend to project all samples of a class with various modes onto a single point in the embedding space, resulting in class collapse that usually renders the space ill-sorted for classification or retrieval. 
To address this problem, we propose a simple modification to the embedding losses such that each sample selects its nearest same-class counterpart in a batch as the positive element in the tuple. This allows for the presence of multiple sub-clusters within each class.
The adaptation can be integrated into a wide range of metric learning losses.
 Our method demonstrates clear benefits on various fine-grained image retrieval datasets over a variety of existing losses; qualitative retrieval results show that samples with similar visual patterns are indeed closer in the embedding space. 
\vspace{-15pt}
\end{abstract}


\section{Introduction}
\label{sec:introduction}

Metric learning aims to learn an embedding function to lower dimensional space, in which semantic similarity translates to neighborhood relations in the embedding space~\cite{Lowe1995}. Deep metric learning approaches achieve promising results in a large variety of tasks such as face identification ~\cite{chopra2005learning,taigman2014deepface,NIPS2014_5416},  zero-shot learning ~\cite{frome2013devise}, image retrieval ~\cite{hoffer2015deep,gordo2016deep} and fine-grained recognition~\cite{wang2014learning}. 

In this work we investigate the family of losses which optimize for an embedding representation that enforces that all modes of intra-class appearance variation project to a single point in embedding space. Learning such an embedding is very challenging when classes have a diverse appearance. This happens especially in real-world scenarios where the class consists of multiple modes with diverse visual appearance. Pushing all these modes to a single point in the embedding space requires the network to memorize the relations between the different class modes, which could reduce the generalization capabilities of the network and result in sub-par performance.

Recently researchers observed that this phenomena, where all modes of class appearance ``collapse'' to the same center, occurs in case of the classification SoftMax loss~\cite{Qian2019}. They proposed a multi-center approach, where multiple centers for each class are used with the SoftMax loss to capture the hidden distribution of the data to solve this issue. 
In the metric learning field, a positive sampling method has been proposed~\cite{DBLP:conf/wacv/XuanSP20} with respect to the N-pair loss \cite{NIPS2016_6200}  in order to relax the constraints on the intra-class relations. For margin-based losses such as the triplet loss~\cite{chechik2010large} and margin loss~\cite{Wu2017}, it was believed that they might offer some relief from class collapsing~\cite{wang2014learning,Wu2017}. From a theoretic perspective, we prove that with optimal assumptions on the hypothesis space and the training procedure, it is indeed true that the margin-based losses have a minimal solutions without class collapsing. However, we formulate a noisy framework and prove that with modest noise assumptions on the labels, the margin-based losses yet suffer from class collapse and the easy positive sampling method proposed in~\cite{DBLP:conf/wacv/XuanSP20} allow more diverse solutions. Adding noise to the labels allow modelling both the aleatoric and the approximation uncertainties of the neural network, therefore it batters represent the training process on real-world datasets with fixed restricted network architecture.  

We complement our theoretical study with an extensive empirical study, which demonstrates the class-collapsing phenomena on real-world datasets, and show that the easy positive sampling method is able to create a more diverse embedding which results in better generalization performances. These findings suggest that the noisy environment framework better fits the training dynamic of neural networks in real-world use cases.

\begin{figure}[t!]
	\begin{center}
        \includegraphics[width=1\linewidth]{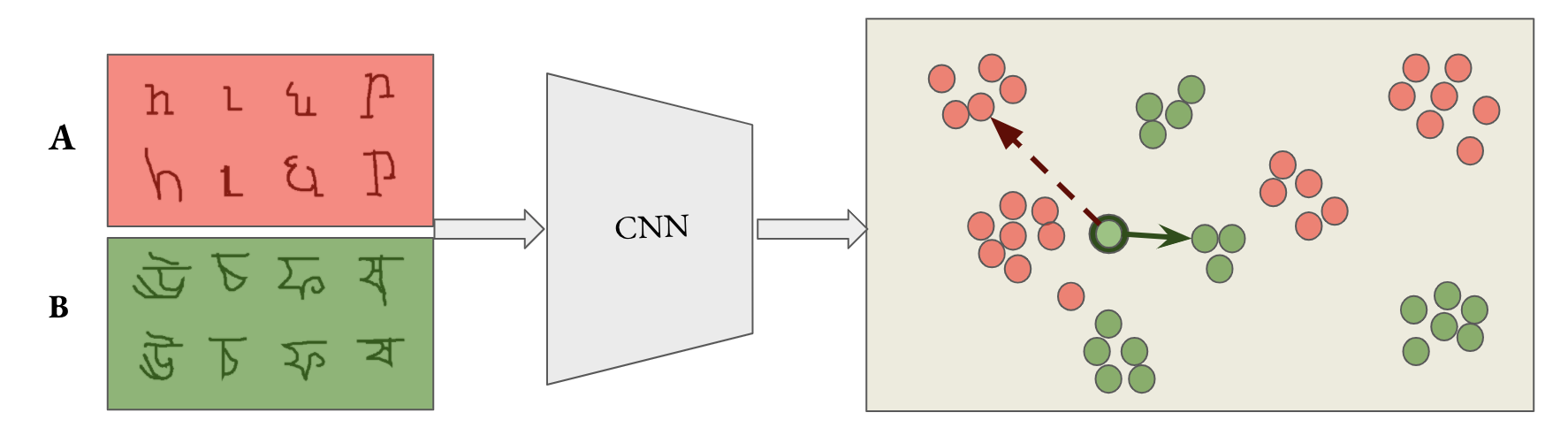}
	\caption{\small{Given an anchor (circle with dark ring), our approach samples the closest positive example in the embedding space as the positive element. This results in pushing the anchor only towards the closest element direction (green arrow), which allows the embedding to have multiple clusters for each class.}}
	\label{fig:overview}
	\end{center}
\end{figure}

\section{Related Work}
\label{sec:related_work}

\textbf{Sampling methods.} Designing a good sampling strategy is a key element in deep metric learning. Researchers have been proposed sampling methods when sampling both the negative examples as well as the positive pairs. For negative samples, studies have focused on sampling hard negatives to make training more efficient~\cite{simo2015discriminative,Schroff2015,Wang_UnsupICCV2015,oh2016deep,parkhi2015deep}. Recently, it has been shown that increasing the negative examples in training can significantly help unsupervised representation learning with contrastive losses~\cite{he2019momentum,wu2018unsupervised,chen2020simple}.  Besides negative examples, methods for sampling hard positive examples have been developed in classification and detection tasks~\cite{loshchilov2015online,shrivastava2016training,arandjelovic2016netvlad,cubuk2019autoaugment,singh2017hide,Wang_afrcnnCVPR2017}. The central idea is to perform better augmentation to improve the generalization in testing~\cite{cubuk2019autoaugment}. In contrast,  Arandjelovic et al.~\cite{arandjelovic2016netvlad}, proposed to perform a positive sampling by assigning the near instance from the same class as the positive instance. As the positive training set is highly noisy in their setting, this method leads to features invariant to different perspectives. Different from this approach, we use this method in a clean setting, where the purpose is to get the opposite result of maintaining the inner-class modalities in the embedding space. Using easy positive sampling has been also proposed with respect to the N-pair loss~\cite{DBLP:conf/wacv/XuanSP20} in order to relax the constraints of the loss on the intra-class relations. From a theoretic perspective, we prove that in a clean setting this relaxation is redundant for other popular metric losses like the triplet loss~\cite{chechik2010large} and margin loss~\cite{Wu2017}. We formulate the noisy-environment setting and prove that in this case the triplet and margin losses also suffer from class-collapsing and using an easy positive sampling method optimizes for solutions without class-collapsing. We also provide an empirical study that supports the theoretic analysis.

\textbf{Model uncertainty.} 
There are three types of sources for uncertainty: epistemic, aleatoric and approximation \cite{2cfc44fb6aa842f5ac03e51409667171}. 
The epistemic uncertainty describes the lack of knowledge of the model, the approximation uncertainty describes the model limitation to fit the data, and the aleatoric uncertainty accounts for the stochastic nature of the data.
While the epistemic uncertainty is relevant only in regions of the feature space where there is a lack of data, both the approximation and the aleatoric uncertainties are relevant also in regions where there is labelled data.
In this work, we model the approximation and the aleatoric uncertainties, by adding noise to the labels. This noise can stand for the data stochasticity in the aleatoric uncertainty case, or the results of the Bayes optimal model within the hypothesis space in case of the approximation uncertainty. The approximation uncertainty in deep neural networks is considered to be negligible \cite{citeulike:3561150}. However, we prove that even a small amount of noise results in a degeneration of the family of optimal solutions in case of the margin-based losses.

Learning with noisy labels is a practical problem when applied to the real world~\cite{scott2013classification,natarajan2013learning,pmlr-v97-shen19e,reed2014training,jiang2017mentornet,khetan2017learning,malach2017decoupling}, especially when training with large-scale data~\cite{sun2017revisiting}. One line of work applies a data-driven curriculum learning approach where the data that are most likely labelled correctly are used for learning in the beginning, and then harder data is taken into learning during a later phase~\cite{jiang2017mentornet}. Researchers have also tried on to apply the loss only on the easiest top k-elements in the batch, determined by the lowest current loss~\cite{pmlr-v97-shen19e}. Inspired by these the easy positive sampling method focuses on selecting only the top easiest positive relations in the batch.


\textbf{Beyond memorization.} Deep networks are shown to be extremely easy to memorize and over-fit to the training data~\cite{zhang2016understanding,recht2018cifar,recht2019imagenet}. For example, it is shown the network can be trained with randomly assigned labels on the ImageNet data, and obtain $100\%$ training accuracy if augmentations are not adopted. Moreover, even the CIFAR-10 classifier performs well in the validation set, it is shown that it does not really generalize to new collected data which is visually similar to the training and validation set~\cite{recht2018cifar}. In this paper, we show that when allowing the network the freedom not to have to learn inner-class relation between different class modes, we can achieve much better generalization, and the representation can be applied in a zero-shot setting. 

\section{Preliminaries}
\label{gen_inst}
Let $X=\{x_1,..,x_n\}$ be a set of samples with labels $y_i \in{\{1,..,m\}} $. The objective of metric learning is to learn an embedding $f(\cdot,\theta)\xrightarrow{}\mathbb{R}^k$, in which the neighbourhood of each sample in the embedding space contains samples only from the same class. One of the common approaches for metric learning is using embedding losses in which at each iteration, samples from the same class and samples from different classes are chosen according to same sampling heuristic. The objective of the loss is to push away projections of samples from different classes, and pull closer projections of samples from a same class. In this section, we introduce a few popular embedding losses.

\textbf{Notation:} Let $x_i,x_j\in{X}$, define: $D^f_{x_i,x_j} = \Arrowvert\ f(x_i) - f(x_j) \Arrowvert\ ^2 $. In cases where there is no ambiguity we omit $f$ and simply write $D_{x_i,x_j}$ .  We also define the function $\delta_{x_{i},x_{j}}=\begin{cases}
1 & \mathrm{if}~~y_{i}=y_{j}\\
0 & \mathrm{otherwise}
\end{cases}$. 
Lastly, for every $a \in \mathbb{R}$, denote $(a)_+ := max(a,0)$. \par

The Contrastive loss~\cite{Hadsell} takes tuples of samples embeddings. It pushes tuples of samples from different classes apart and pulls tuples of samples from the same class together.
\[\mathcal{L}_{con}^{f}(x_i,x_j) = \delta_{x_{i},x_{j}}\cdot D^f_{x_i,x_j} + (1-\delta_{x_{i},x_{j}})\cdot (\alpha- D^f_{x_i,x_j})_+\]
Here $\alpha$ is the margin parameter which defines the desired minimal distance between samples from different classes.   

While the Contrastive loss imposes a constraint on a pair of samples, the Triplet loss~\cite{chechik2010large} functions on a triplet of samples. Given a triplet $x_a,x_p,x_n\in X$, let  \[h^{f}(x_a,x_p,x_n)=(D^f_{x_a,x_p} - D^f_{x_p,x_n} + \alpha)_+\] the triplet loss is defined by 
\[\mathcal{L}_{trip}^f(x_a,x_p,x_n) = \delta_{x_{a},x_{p}}\cdot(1-\delta_{x_{a},x_{n}})\cdot h^{f}(x_a,x_p,x_n)\]

The Margin loss~\cite{Wu2017} aims to exploit the flexibility of Triplet loss while maintaining the computational efficiency of the Contrastive loss. This is done by adding a variable ($\beta_x$ for $x \in X$) which determines the boundary between positive and negative pairs; given an anchor $t\in X$, let \[g(z_1,z_2)=( z_1 - z_2 + \alpha)_+\] the loss is defined by \[ \mathcal{L}^{f,\beta}_{mar}(t,x) = \delta_{t,x}\cdot g(D^f_{t,x} , \beta_{t}) + (1-\delta_{t,x})\cdot g(\beta_{t}, D^f_{t,x})\]



\section{Class-collapsing}
\label{headings}

The contrastive loss objective is to pull all the samples with the same class to a single point in the embedding space. We call this the \textit{Class-collapsing} property. Formally, an embedding $f: X \xrightarrow{} \mathbb{R}^m$ has the class-collapsing property, if there exists a label $y$ and a point $p\in \mathbb{R}^m$ such that $\{f(x_i) |\quad y_i = y \} = \{p\}$.

\subsection{Embedding losses optimal solution}
It is easy to see that an embedding function $f$ that minimizes: \[\mathbb{O}_{con}(f)=\frac{1}{n^{2}}\left(\sum_{x_{i},x_{j}\in X}\mathcal{L}_{con}^{f}(x_{i},x_{j})\right)\]
has the class-collapsing property with respect to all classes. However, this is not necessarily true for the Triplet loss and the Margin loss.


For simplification for the rest of this subsection we will assume there are only two classes. Let $A\subset X$ be a subset of elements such that all the elements in $A$ belongs to one class and all the element in $A^c$ belong to the other class.

Recall some basic set definitions.
\theoremstyle{definition}
\begin{definition}{}
For all sets $Y,Z \subset \mathbb{R}^m$ define:
\begin{enumerate}
  \item The diameter of $Y$ is defined by: \[diam(Y) = \sup\{\Arrowvert y-z \Arrowvert\ | y,z\in Y\}\] 
  \item The distance between Y and Z is: \[\Arrowvert Y- Z \Arrowvert = \inf\{\Arrowvert y-z \Arrowvert\ | y\in Y, z\in Z\} \]
\end{enumerate}
\end{definition}

It is easy to see that if $f: X \xrightarrow{}\mathbb{R}^m$ is an embedding, such that $diam(f(A)) < 2 \cdot \alpha +  \Arrowvert f(A) - f(B) \Arrowvert$, then: \[\mathbb{O}_{trip}(f)=\frac{1}{n^{3}}\left(\sum_{x_{i},x_{j},x_{k}\in X}\mathcal{L}_{trip}^{f}(x_{i},x_{j},x_{k})\right) = 0.\]\par
Moreover, fixing $\beta_{x_i} =\alpha$ for every $x_i \in X$, then:  

\[\mathbb{O}_{margin}(f,\beta)=\frac{1}{n^{2}}\left(\sum_{x_{i},x_{j}\in X}\mathcal{L}^{f,\beta}_{margin}(x_{i},x_{j})\right) = 0.\] 

It can be seen that indeed, the family of embeddings which induce the global-minimum with respect to the Triplet loss and the Margin loss, is rich and diverse. However, as we will prove in the next subsection, this does not remain true in a noisy environment scenario.

\subsection{Noisy environment analysis}
For simplicity we will also discuss in this section the binary case of two labels, however this could be extended easily to the multi-label case.

The noisy environment scenario can be formulated by adding uncertainty to the label class.
More formally, let $Y = \{Y_1,..,Y_n\}$ be a set of independent binary random variables. Let $A_1,..,A_t\subset X$, $0.5<p<1$ such that: $|A_j| = \frac{n}{t}$ and \[\mathbb{P}(Y_i=k)= \begin{cases}
p & x_i\in A_k\\
q':=\frac{1-p}{t-1} & x_i\notin A_k
\end{cases} \]

We can also reformulate $\delta$ as a binary random variable such that:
\[ \overline{\delta}_{Y_{i},Y_{j}}:= \mathbf{1}_{Y_{i}=Y_{j}}  \]

The loss with respect to embedding $f$ is a random variable and the objective is to minimize its expectation

\[\mathbb{E}\mathcal{L}_{tr}^{f}(x_{i},x_{j},x_{k})=\mathbb{E}\left(\bar{\delta}_{Y_{i},Y_{j}}\cdot(1-\bar{\delta}_{Y_{i},Y_{k}})\right)\cdot h^f(x_i,x_j,x_k)\]
Therefore, we are searching for an embedding function which minimize 
\[ \mathbb{E}\mathbb{O}_{trip}(f)=\frac{1}{n^{3}}\sum_{x_{i},x_{j},x_{k}\in X}\mathbb{E}\mathcal{L}_{trip}^{f}(x_{i},x_{j},x_{k}) \]
In Appendix A we will prove the following two theorems.

\begin{theorem}[1] 
\label{Lagrange}
Let $f:X \xrightarrow{}\mathbb{R}^m$ be an embedding, which minimize $ \mathbb{E}\mathbb{O}_{trip}(f)$, then $f$ has the class-collapsing property with respect to all classes.
\end{theorem}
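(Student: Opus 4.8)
The plan is to rewrite $\mathbb{E}\mathbb{O}_{trip}$ as a convex program over Euclidean squared-distance matrices and to certify, via Lagrange-type multipliers, that the fully collapsed configuration is its unique optimum. I work throughout in the binary setting of this section.

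First I would open up the expectation. By independence of $Y_i,Y_j,Y_k$, the weight $c_{ijk}:=\mathbb{E}\bigl(\bar\delta_{Y_i,Y_j}(1-\bar\delta_{Y_i,Y_k})\bigr)=\mathbb{P}(Y_i=Y_j\ne Y_k)$ is explicit, and I would record four facts: (a) $c_{ijk}\ge 0$, with $c_{ijk}=0$ exactly when $x_i=x_k$; (b) for distinct indices, $c_{ijk}=p^{3}+q^{3}$ (with $q=1-p$) when $x_i,x_j$ lie in the same true cluster and $x_k$ in the other — call such a triple \emph{truth-aligned} — and $c_{ijk}=pq$ for every other distinct triple; (c) hence $c_{ijk}\ge pq$ whenever $x_i\ne x_k$; (d) the gap $p^{3}+q^{3}-pq=(p-q)^{2}>0$ for $p\in(\tfrac12,1)$. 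This gives
\[ n^{3}\,\mathbb{E}\mathbb{O}_{trip}(f)=\sum_{i,j,k}c_{ijk}\,\bigl(D^{f}_{x_i,x_j}-D^{f}_{x_j,x_k}+\alpha\bigr)_{+}. \]
Since every positive-weight triple with distinct indices appears together with its reverse, $h^{f}(x_i,x_j,x_k)=0$ and $h^{f}(x_k,x_j,x_i)=0$ would force both $D^{f}_{x_j,x_k}\ge D^{f}_{x_i,x_j}+\alpha$ and the reverse inequality, impossible for $\alpha>0$; so the infimum is strictly positive and the theorem cannot follow from zeroing summands one by one.

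Next I would note that the right-hand side is \emph{convex} in the matrix $D=(D^{f}_{x_i,x_j})$: each $D_{x_i,x_j}-D_{x_j,x_k}+\alpha$ is affine in $D$, $(\cdot)_{+}$ is convex, and the $c_{ijk}$ are nonnegative. The squared Euclidean distance matrices of $n$ points (in unbounded dimension) form a convex cone, so minimizing over it is a genuine convex problem; it therefore suffices to solve that relaxation and check its optimizer has rank one, hence is realizable as an $f\colon X\to\mathbb{R}^{m}$ for any $m\ge1$. The candidate optimum is $D^{\star}$ with $D^{\star}_{x_i,x_j}=0$ if $x_i,x_j$ share a true cluster and $D^{\star}_{x_i,x_j}=\alpha$ otherwise: a one-line check shows that among collapsed configurations the inter-cluster squared distance $\alpha$ is optimal — smaller reactivates truth-aligned triples, larger is penalized by triples with $x_i$ in one cluster and $x_j,x_k$ in the other — so $D^{\star}$ is the only collapsed candidate.

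Finally I would certify $D^{\star}$ using $(z)_{+}\ge\lambda z$ for $\lambda\in[0,1]$, with multipliers set by complementary slackness: $\lambda_{ijk}=1$ when $D^{\star}_{x_i,x_j}-D^{\star}_{x_j,x_k}+\alpha>0$, and $\lambda_{ijk}\in[0,1]$ free on the active (truth-aligned) triples, where this quantity vanishes. The goal is to tune the free multipliers so that the resulting linear-in-$D$ lower bound $\sum_{i,j,k}c_{ijk}\lambda_{ijk}\bigl(D_{x_i,x_j}-D_{x_j,x_k}+\alpha\bigr)$ is minimized over the whole distance-matrix cone exactly at $D^{\star}$ (equivalently, $-$its gradient lies in the normal cone of the cone at $D^{\star}$); this is precisely where the gap $(p-q)^{2}>0$ is indispensable, since the truth-aligned ``pull the cluster together'' weight must dominate the uniform ``push apart'' weight $pq$ carried by the all-same-cluster triples. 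Producing those multipliers and verifying the cone-optimality of $D^{\star}$ for the induced linear functional — a finite computation using the geometry of Euclidean distance matrices — is the hard part. Given it, convexity yields that $D^{\star}$ is a global minimizer, and one short extra step finishes uniqueness: on the segment from $D^{\star}$ to any other minimizer the convex objective is affine, so every active constraint stays active along it, and with strict complementary slackness in the certificate this pins the other endpoint at $D^{\star}$. Since $D^{\star}$ has intra-cluster distance $0$, the minimizer $f$ is constant on each class, i.e. it has the class-collapsing property.
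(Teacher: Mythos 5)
Your weight computation is correct ($c_{ijk}=p^{3}+q^{3}$ on truth-aligned distinct triples, $pq$ on the other distinct triples, gap $(p-q)^{2}$), and recasting the problem as a convex minimization over the cone of Euclidean squared-distance matrices is legitimate. But the proposal has a genuine gap exactly where you flag ``the hard part'': the multiplier certificate and the strict-complementarity uniqueness argument are never produced, and in fact they cannot be produced, because $D^{\star}$ is not the unique minimizer of your relaxation. Group the triples by anchor: for an anchor $x_i$, an in-class $x_j$ and an out-of-class $x_k$, the two ordered triples contribute $(p^{3}+q^{3})(u+\alpha)_{+}+pq(\alpha-u)_{+}$ with $u=D_{x_i,x_j}-D_{x_i,x_k}$, whose minimum $2pq\alpha$ is attained exactly at $u=-\alpha$; for two in-class (or two out-of-class) points the contribution is $pq\bigl[(u+\alpha)_{+}+(\alpha-u)_{+}\bigr]\ge 2pq\alpha$, with equality whenever $|u|\le\alpha$. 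Consequently every configuration $D^{(d)}$ with all intra-class squared distances equal to $d\ge 0$ and all inter-class squared distances equal to $d+\alpha$ attains the per-triple lower bound, hence the same objective value as $D^{\star}=D^{(0)}$; such $D^{(d)}$ are genuine squared EDMs (two scaled simplices separated by an orthogonal offset of squared length $\alpha$). The objective is therefore constant along a feasible ray through $D^{\star}$, strict complementary slackness at $D^{\star}$ is impossible, and any certificate can at best place a minimizer on this optimal face --- it cannot force the intra-class distance to be $0$. Your plan of deducing class collapse from uniqueness of $D^{\star}$ breaks at precisely the step you deferred. (A smaller slip: in your ``one-line check,'' the triples that penalize enlarging the inter-cluster distance in a collapsed configuration are those with an out-of-class point in the positive slot and an in-class point in the negative slot, carrying weight $pq$; the triples with $x_j,x_k$ both in the opposite cluster contribute the constant $\alpha$.)

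For comparison, the paper does not argue via convex duality at all: it decomposes $\mathbb{E}\mathbb{O}_{trip}$ anchor by anchor, reduces each mixed pair to a scalar function of the two distances $w=D_{x_i,x_j}$, $h=D_{x_i,x_k}$ (exactly the function displayed above), and reads off the minimizing configuration by case analysis --- a more elementary route with no EDM geometry needed. Note, however, that the decisive step there is the claim that the minimum forces the specific values (intra-class distance $0$, inter-class distance $\alpha$) rather than merely the difference $\alpha$; the flat direction exhibited above is exactly what a complete argument must rule out, and your certificate sketch, as written, does not do so. If you want to salvage the convex-program route, you would need to identify an additional ingredient in the objective (or an explicit extra constraint) that breaks the translation invariance $D^{(d)}\mapsto D^{(d+c)}$ of the per-triple minima, and only then attempt an optimality certificate for the resulting problem.
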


Similarly, we can define:
 \begin{align*}
 & \mathbb{E}\mathcal{L}_{mar}^{f}(x_{i},x_{j})=\mathbb{E}\bar{\delta}_{Y_{i},Y_{j}}\cdot(D_{x_{i},x_{j}}^{f}-\beta_{x_{i}}+\alpha)_{+}\\
 & +\mathbb{E}(1-\bar{\delta}_{Y_{i},Y_{j}})\cdot(\beta_{x_{i}}-D_{x_{i},x_{j}}^{f}+\alpha)_{+}
\end{align*}

\begin{theorem}[2]
\label{Lagrange}
Let $f: OX \xrightarrow{}\mathbb{R}^m$ be an embedding, which minimize \[\mathbb{E}\mathbb{O}_{margin}(f,\beta)=\frac{1}{n^{2}}\sum_{x_{i},x_{j}\in X}\mathbb{E}\mathcal{L}_{margin}^{f}(x_{i},x_{j}),\] then $f$ has the class-collapsing property with respect to all classes.
\end{theorem}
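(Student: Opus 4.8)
The plan is to follow the template of Theorem~1's proof: turn the expected margin objective into a sum of one–variable convex piecewise–linear functions, lower bound it term by term, and show the bound can be met only by collapsed embeddings. First I would push the expectation through. Since $\bar\delta_{Y_i,Y_j}=\mathbf{1}_{Y_i=Y_j}$ depends only on the labels and is independent of the deterministic map $f$,
\[
\mathbb{E}\mathcal{L}_{mar}^{f}(x_i,x_j)=\psi_{w_{ij}}\!\big(D^f_{x_i,x_j}-\beta_{x_i}\big),\qquad \psi_w(s):=w\,(s+\alpha)_+ + (1-w)\,(\alpha-s)_+,
\]
where $w_{ij}=\mathbb{P}(Y_i=Y_j)$. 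In the binary case a one–line computation gives $w_{ij}=p_s:=p^2+(1-p)^2$ when $x_i,x_j$ lie in the same $A_k$ and $w_{ij}=p_d:=2p(1-p)$ otherwise, and since $p\in(\tfrac12,1)$ we have $0<p_d<\tfrac12<p_s<1$ with $p_s+p_d=1$. The effect of the noise is precisely that $w_{ij}\in(0,1)$ \emph{strictly}, so on every pair both the pull term $(s+\alpha)_+$ and the push term $(\alpha-s)_+$ of $\psi_{w_{ij}}$ carry positive weight.

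Next I would analyze $\psi_w$ for a fixed $w\in(0,1)\setminus\{\tfrac12\}$: it is convex, piecewise linear with breakpoints at $s=\pm\alpha$, and has a \emph{unique} minimizer, at $s=-\alpha$ if $w>\tfrac12$ and at $s=\alpha$ if $w<\tfrac12$, with $\min_s\psi_w(s)=2\alpha\min(w,1-w)$; the diagonal terms have $w_{ii}=1$, $\psi_1(s)=(s+\alpha)_+$, whose minimum value $0$ is attained exactly for $s\le-\alpha$. Summing the pointwise minima over the $n^2$ ordered pairs, and using $\min(p_s,1-p_s)=\min(p_d,1-p_d)=p_d$, gives
\[
\mathbb{E}\mathbb{O}_{margin}(f,\beta)\ \ge\ \frac{1}{n^{2}}\,n(n-1)\,2\alpha p_d\ =\ \frac{n-1}{n}\,2\alpha p_d,
\]
with equality iff every off–diagonal term sits at its minimizer: (i) $\beta_{x_i}\ge\alpha$ for all $i$; (ii) $D^f_{x_i,x_j}=\beta_{x_i}-\alpha$ for all same–class $i\ne j$; and (iii) $D^f_{x_i,x_j}=\beta_{x_i}+\alpha$ for all different–class $i,j$.

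The core step is a rigidity argument deducing collapse from (i)–(iii). By symmetry $D^f_{x_i,x_j}=D^f_{x_j,x_i}$, condition (ii) forces $\beta_{x_i}=\beta_{x_j}$ whenever $y_i=y_j$, so $\beta$ is constant on each class; (iii) then equates those constants across classes, so $\beta_{x_i}\equiv\beta^\ast\ge\alpha$, all same–class squared distances equal $\beta^\ast-\alpha$, and all cross squared distances equal $\beta^\ast+\alpha$. If $\beta^\ast>\alpha$ each class would be a set of $n/t$ points at a common \emph{positive} pairwise squared distance, i.e. a regular simplex, impossible in $\mathbb{R}^m$ once $m<n/t-1$ — the intended ``fixed restricted architecture on a large dataset'' regime — so $\beta^\ast=\alpha$ and hence $D^f_{x_i,x_j}=0$ whenever $y_i=y_j$; that is, $f$ collapses every class to a point. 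The lower bound is attained (collapse each class to a point, place the two class centers at squared distance $2\alpha$, set $\beta\equiv\alpha$), so a minimizer exists, must meet the bound with equality, and is therefore collapsed; the multi–label case is identical once $p_s,p_d$ are replaced by the relevant values of $\mathbb{P}(Y_i=Y_j)$.

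The main obstacle I anticipate is exactly this rigidity step: one must argue that, in the regime where a class cannot be spread into a full–dimensional simplex in $\mathbb{R}^m$, the \emph{only} pairs $(f,\beta)$ attaining the pointwise lower bound are the collapsed ones (no ``partially spread'' configuration does as well), and one must check that the per–anchor minimization over $\beta_{x_i}$ genuinely decouples. The remaining ingredients — passing to the expectation, the piecewise–linear calculus of $\psi_w$, and the simplex dimension count — are routine.
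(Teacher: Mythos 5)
Your computational core is the same as the paper's: push the expectation through, note that for every pair both hinge terms carry positive weight, restrict to the regime $|D^f_{x_i,x_j}-\beta_{x_i}|\le\alpha$ where the expected term is linear, and minimize term by term. (Your weights $p_s=p^2+(1-p)^2$ and $p_d=2p(1-p)$ are in fact the correct values of $\mathbb{P}(Y_i=Y_j)$ under the stated noise model, whereas the paper's proof writes $p$ and $1-p$; this changes nothing structurally since what matters is only that the same-class weight exceeds $1/2$ and the cross-class weight is below it.)

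Where you genuinely diverge is the final step, and this is worth being explicit about. You correctly observe that the per-pair minimum value $2\alpha\min(w_{ij},1-w_{ij})$ is independent of $\beta_{x_i}$, so the equality conditions are only $\beta_{x_i}\equiv\beta^\ast\ge\alpha$, $D^f_{x_i,x_j}=\beta^\ast-\alpha$ within classes and $\beta^\ast+\alpha$ across classes; any realizable ``two-distance'' configuration with $\beta^\ast>\alpha$ attains exactly the same objective value, so collapse does not follow from the loss analysis alone. You close this with a dimension count: a class of $n/t$ points at a common positive squared distance needs $m\ge n/t-1$, so assuming $m<n/t-1$ forces $\beta^\ast=\alpha$ and hence collapse. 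The paper's proof does not take this route: it asserts directly that the minimum is achieved at $D_{\mathrm{same}}=0$, $D_{\mathrm{cross}}=2\alpha$, $\beta_{x_i}=\alpha$, without addressing the $\beta^\ast>\alpha$ alternatives at all, i.e., it proves existence of a collapsed minimizer but not that every minimizer collapses. So relative to the literal statement your proof is conditional (the hypothesis $m<n/t-1$, and the simplex-realizability check for the collapsed configuration, appear nowhere in the theorem), but relative to the paper's argument yours is the more careful one: it identifies precisely the degeneracy in $\beta$ that the paper's proof silently steps over, and it makes clear that some restriction on the embedding dimension (or an additional argument the paper does not supply) is needed to rule out non-collapsed minimizers when $m$ is large. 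If you intend your write-up to stand as a proof of the theorem exactly as stated, you should flag that the dimension hypothesis is an added assumption rather than a consequence of the setting; otherwise the argument is sound, including the rigidity deduction that $\beta$ is constant from the symmetry $D^f_{x_i,x_j}=D^f_{x_j,x_i}$.
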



In conclusion, although theoretically in clean environments the Triplet loss and Margin loss should allow more flexible embedding solutions, this does not remain true when noise is considered. On a real-world data, where mislabeling and ambiguity can be usually be found, the optimal solution with respect to both these losses becomes degenerate.

\subsection{Easy Positive Sampling (EPS)}

Using standard embedding losses for metric learning can result in an embedding space in which visually diverse samples from the same class are all concentrated in a single location in the embedding space. 
Since the standard evaluation and prediction method for image retrieval tasks are typically based on properties of the K-nearest neighbours in the embedding space, the class-collapsing property is a side-effect which is not necessarily in order to get optimal results. 
In the next section, we will show experimental results, which support the assumption that complete class-collapsing can hurt the generalization capability of the network.

To address the class-collapsing issue we propose a simple method for sampling, which results in weakening the objective penalty on the inner-class relations, by applying the loss only on the closest positive sample. Formally we define the EPS sampling in the following way; given a mini-batch with $N$ samples, for each sample $a$, let $C_a$ be the set of elements from the same class as $a$ in the mini-batch,  we choose the positive sample $p_a$ to be \[
\arg\limits\min_{t\in{C_a}}(\Arrowvert f(t)-f(a)\Arrowvert)
\]
For negative samples $n_a$ we can choose according to various options. In this paper we use the following methods: \textbf{(a)} Choosing randomly from all the elements which are not in $C_a$. \textbf{(b)} Using distance sampling~\cite{Wu2017}. \textbf{(c)} semi-hard sampling~\cite{Schroff2015},\textbf{(d)} MS hard-mining sampling~\cite{wang2019multi}.
We then apply the loss on the triplets $(a,p_a,n_a)$. Using such sampling changes the loss objective such that instead of pulling all samples in the mini-batch from the same class to be close to the anchor, it only pulls the closest sample to the anchor (with respect to the embedding space) in the mini-batch, see Figure \ref{fig:overview}.

In Appendix B, we formalize this method in the noisy environment framework. We prove (Claim 1,2) that every embedding which has the class collapsing property is \emph{not} a minimal solution with respect to both the margin and the triplet loss with the easy positive sampling. Furthermore, in Claim 3,4 we prove that the objective of the losses with EPS on tuples/triplets is to push away every element (including positive elements), that is not in the k-closest elements to the anchor, where k is determined by the noise level $p$. Therefore, if we apply the EPS method on a mini-batch which has small number of positive elements from each modality, in such case adding the EPS to the losses not only relaxes the constraints on the embedding, allowing the embedding to have multiple inner-clusters. It also optimizes the embedding to have this form.


\section{Experiments}
We test the EPS method on image retrieval and clustering datasets. We evaluate the image retrieval quality based on the recall@k metric~\cite{Jegou2011} , and the clustering quality by using the normalized mutual information score (NMI)~\cite{Manning2008}. The NMI measures the quality of clustering alignments between the clusters induced by the ground-truth labels and clusters induced by applying clustering algorithm on the embedding space. The common practice to choose the NMI clusters is by using K-means algorithm on the embedding space, with K equal to the number of classes. However, this prevents from the measurement capturing more diverse solutions in which homogeneous clusters appear only when using larger amount of clusters. Regular NMI prefers solutions with class-collapsing. Therefore, we increase the number of clusters in the NMI evaluation (denote it by NMI+) we also report the regular NMI score.   


\subsection{MNIST Even/Odd Example}

\begin{figure*}[t!]
	\begin{center}
        \includegraphics[width=\linewidth]{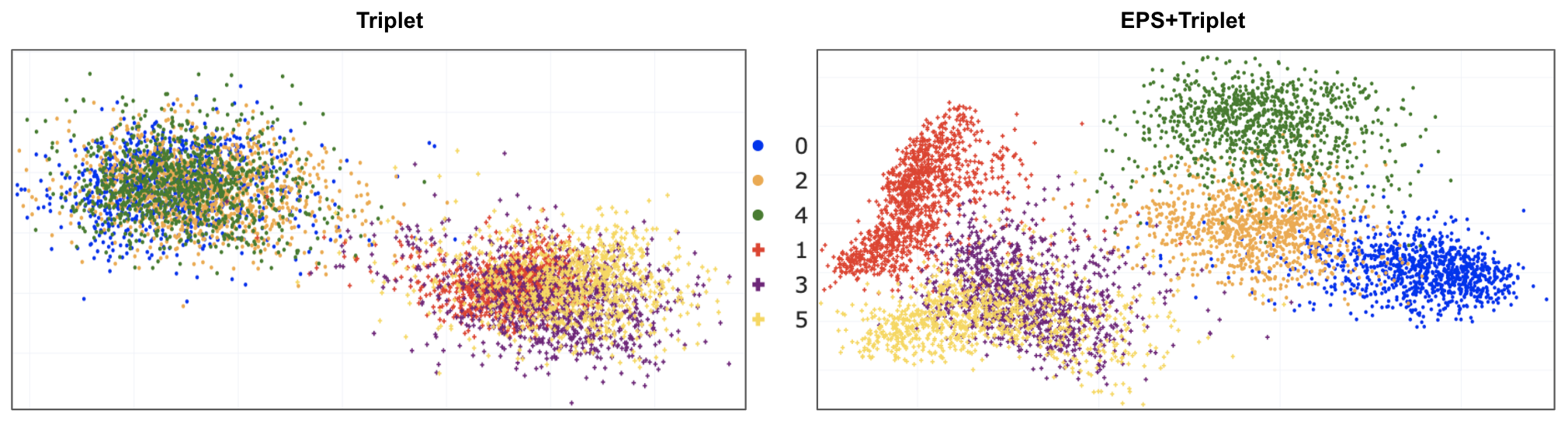}
	\caption{\small{Embedding examples from  the MNIST validation set, after training using only even/odd labels. Different colors indicate different digits. \textbf{Left:} Using Triplet-loss, class collapsing pushes  all intra-class digits to overlapping  clusters.
	\textbf{Right:} With EPS, different digits form separate clusters. 
	Retrieval or classification using the odd-vs-even task/metric is more effectively implemented using the embedding on the right, even though the embedding on the left is learned with a loss that more strictly optimizes for the task.
	}}
	\vspace{+10pt}
	\label{fig:mnist}
	\end{center}
\end{figure*}

To demonstrate the  class-collapsing phenomena, we take the MNIST dataset~\cite{Lecun1998}, and split the digits according to odd and even. From a visual perspective this is an arbitrary separation. We took the first 6 digits for training and left the remaining 4 digits for testing. We used a simple shallow architecture which results in an embedding function from the image space to $\mathbb{R}^2$ (For implementation details see Appendix C).


 We train the network using the triplet loss. We compare the EPS method to random sampling of positive examples (the regular loss).  As can be seen in Figure~\ref{fig:mnist}, the regular training without EPS suffers from class-collapsing. Training with EPS creates a richer embedding in which there is a clear separation not only between the two-classes, but also between different digits from the same class. As expected, the class-collapsing embedding preforms worse on the test data with the unseen digits, see Table \ref{tab:Mnist_table}.
 


\begin{table}[!t]

\centering

\tablestyle{3.75pt}{1.1}
\begin{tabular}{l|x{35}x{35}|x{35}x{35}|x{35}x{35}}
\multicolumn{1}{c|}{\multirow{2}{*}{}} & \multicolumn{2}{c|}{MNIST Train Digits} & \multicolumn{2}{c|}{MNIST Test Digits} \\
& Triplet & Trip+EPS & Triplet & Trip+EPS  \\
\shline
R@1 & 42.0 &\textbf{65.8} & 35.2 & \textbf{42.3}  \\
R@5 & 87.5 & \textbf{93.6} & 80.9 & \textbf{83.9}   \\
R@10 & 96.6 & \textbf{97.4} & 93.3 & \textbf{93.6}  \\

\end{tabular}

\vspace{+10pt}
\caption{Recall@k evaluated on MNIST dataset. The train classes are digits 0-5 and the test classes are digits 6-9 }
\label{tab:Mnist_table}
\end{table}

\subsection{Fine-grained Recognition Evaluation}
We compare the EPS approach to previous popular sampling methods and losses. The evaluation is conducted on standard benchmarks for zero-shot learning and image retrieval following the common splitting and evaluation practice~\cite{Wu2017,MovshovitzAttias2017,6321}. We build our implementation on top of the framework of~\cite{Roth2020}, which allows us to have a fair comparison between all the tested methods with an embedding of fix size (128). For more implementation details and consistency of the results, see Appendix C.

\subsubsection{Datasets}
We evaluate our model on the following datasets.
\begin{itemize}
  \item \textbf{Cars-196}~\cite{KrauseStarkDengFei-Fei_3DRR2013}, which contains 16,185 images of 196 car models. We follow the split in \cite{Wu2017}, using 98 classes for training and 98 classes for testing.
  \item \textbf{CUB200-2011}~\cite{WahCUB_200_2011}, which contains 11,788 images of 200 bird species. We also follow \cite{Wu2017}, using 100 classes for training and 100 for testing.
\item \textbf{Omniglot}~\cite{Lake2015}, which contains 1623 handwritten characters from 50 alphabet. In our experiments we only use the alphabets labels during the training process, i.e, all the characters from the same alphabet has the same class. We follow the split in~\cite{Lake2015} using 30 alphabets for training and 20 for testing.
\end{itemize}

\begin{table*}[!t]
\centering

\tablestyle{4.2pt}{1.05}
\begin{tabular}{l|x{22}x{22}x{22}|x{22}x{22}|x{22}x{22}x{22}|x{22}x{22}}
\multicolumn{1}{c|}{\multirow{2}{*}{model}} & \multicolumn{5}{c|}{Cars-196} & \multicolumn{5}{c}{CUB-200} \\
& R@1 & R@2 & R@4 & NMI & NMI+ & R@1 & R@2 & R@4 & NMI & NMI+ \\
\shline
Trip. + SH~\cite{Schroff2015} & 51.5 & 63.8 & 73.5 & 53.4 & - & 42.6 & 55.0 & 66.4 & 55.4 & - \\
Trip. + SH{$^\dagger$} & 76.1 & 84.4 & 90.0 & 65.1 & 68.5 & 61.5 & 73.4 & 82.5 & 66.2 & 68.1 \\
ProxyNCA~\cite{MovshovitzAttias2017} & 73.2 & 82.4 & 86.4 & 64.9 & - & 49.2 & 61.9 & 67.9 & 64.9 & - \\
ProxyNCA{$^\dagger$} & 77.1 & 85.2 & 91.2 & 65.6 & 68.9 & 63.1 & 74.8 & 83.8 & 67.2 & 68.7 \\
Dist-Margin~\cite{Wu2017} & 79.6 & 86.5 & 91.9 & \textbf{69.1} & 70.4 & 63.6 & 74.4 & 83.1 & \textbf{69.0} & 68.7 \\
MS~\cite{wang2019multi} & 77.3 & 85.3 & 90.5 & - & -  & 57.4 & 69.8 & 80.0 & - & - \\

MS{$^\dagger$} & 81.2 & 89.1 & 93.5 & 60.5 & 71.1  & 62.3 & 73.3 & 82.1 & 59.8 & 68.0 \\

\hline
EPS + Trip. + SH & 78.3 & 85.9 & 91.4 & 59.8 & 69.8 & 61.8  & 73.6 & 82.4 & 62.4 & 68.0 \\

EPS + Dist-Margin & \textbf{83.6} & \textbf{89.5} & \textbf{93.6} & 67.3 & \textbf{72.4} & \textbf{64.7} & \textbf{75.2} & \textbf{84.3} & 68.2 & \textbf{69.4} \\
EPS + MS & 82.9 & 89.4 & 93.2 & 60.0 & 72.0 & 63.3  & 74.2 & 82.5 & 61.2 & 68.2 \\
\end{tabular}

\vspace{+10pt}
\caption{Recall@k and NMI performance on Cars196 and CUB200-
2011. NMI+ indicate the NMI measurement when using 10 (number of classes) clusters. The EPS method improves in all cases. $^\dagger$: Our re-implemented version with the same embedding dimension.}
\label{table:1}
\end{table*}

\begin{table*}[!t]
\centering

\tablestyle{4.2pt}{1.05}
\begin{tabular}{l|x{22}x{22}x{22}x{22}|x{22}|x{22}x{22}x{22}x{22}|x{22}}
\multicolumn{1}{c|}{\multirow{2}{*}{model}} & \multicolumn{5}{c|}{Omniglot-letters} & \multicolumn{5}{c}{Omniglot-languages} \\
& R@1 & R@2 & R@4 & R@8 & NMI & R@1 & R@2 & R@4 & R@8 & NMI+ \\
\shline
Trip. + SH~\cite{Schroff2015}  & 49.4 & 60.0 & 69.2 & 76.9 & 66.2 & 71.0 & 80.2 & 87.6 & 92.4 & 38.7 \\
ProxyNCA~\cite{MovshovitzAttias2017} & 49.1 & 60.4 & 70.9 & 78.9 & 69.0 & 73.0 & 82.1 & 88.8 & 93.5 & 43.3 \\
Dist-Margin~\cite{Wu2017} & 49.4 & 61.1 & 70.1 & 79.2 & 68.9 & 73.2 & 82.3 & 89.1 & 94.0 & 43.5 \\
MS~\cite{wang2019multi} & 57.7 & 68.5 & 77.3 & 83.8 & 69.2 & 78.8 & 86.4 & 92.0 & 95.4 & 46.0 \\
\hline
EPS + Trip. + SH & \textbf{68.4} & \textbf{79.3} & \textbf{86.9} & \textbf{92.1} & \textbf{79.6} & \textbf{85.2} & \textbf{91.1} & \textbf{94.9} & \textbf{97.3} & \textbf{52.6}  \\
EPS + Dist-Margin & 66.2 & 76.7 & 84.8 & 90.3 & 77.9 & 83.0 & 89.4 & 93.6 & 96.4 & 50.7 \\
EPS + MS & 68.7 & 79.1 & 86.9 & 92.2 & 77.3 & 86.2 & 91.7 & 94.9 & 97.2 & 53.8
\end{tabular}

\vspace{+10pt}

\caption{Recall@k and NMI performance on Omniglot dataset. In both cases the training was done with only language labels. \textbf{Right:} evaluation on language labels. \textbf{Left:} evaluation on letter labels. NMI+ indicate the NMI measurement when using 30*(number of classes) clusters. The EPS method improves in both cases.}
\label{table:2}
\end{table*}

\subsubsection {Architecture details}
We use an embedding of size 128, and an input size of  $224 \times 224$ for the first two datasets, and $80 \times 80$ for the Omniglot dataset. For all the experiments we used the original bounding boxes without cropping around the object box. As a backbone for the embedding, we use ResNet50~\cite{He2016} with pretrained weights on imagenet. The backbone is followed by a global average pooling and a linear layer which reduces the dimension to the embedding size. Optimization is performed using Adam with a learning rate of $10^{-5}$, and the other parameters set to default values from


\subsubsection{Results}
We tested the EPS method with 3 different losses: Triplet~\cite{chechik2010large}, Margin~\cite{Wu2017} and Multi-Similarity (MS)~\cite{wang2019multi}. For the Margin loss experiment, we combine the EPS with distance sampling~\cite{Wu2017};  this could be done because the distance sampling only constrains on the negative samples, where our method only constrains on the positive samples. We set the margin $\alpha = 0.2$ and initialized $\beta = 1.2$  as in~\cite{Wu2017}. For the Triplet we combine EPS with semi-hard sampling~\cite{Schroff2015} by fixing the positive according to EPS and then using semi-hard sampling for choosing the negative examples. For the MS loss we replace the positive hard-mining method with EPS and use the same hard-negative method. We use the same hyper-paremeters as in~\cite{wang2019multi} $\alpha=2,\lambda=1,\beta=50$.

\begin{figure*}[t!]
    \begin{center}
        \includegraphics[width=1.\linewidth]{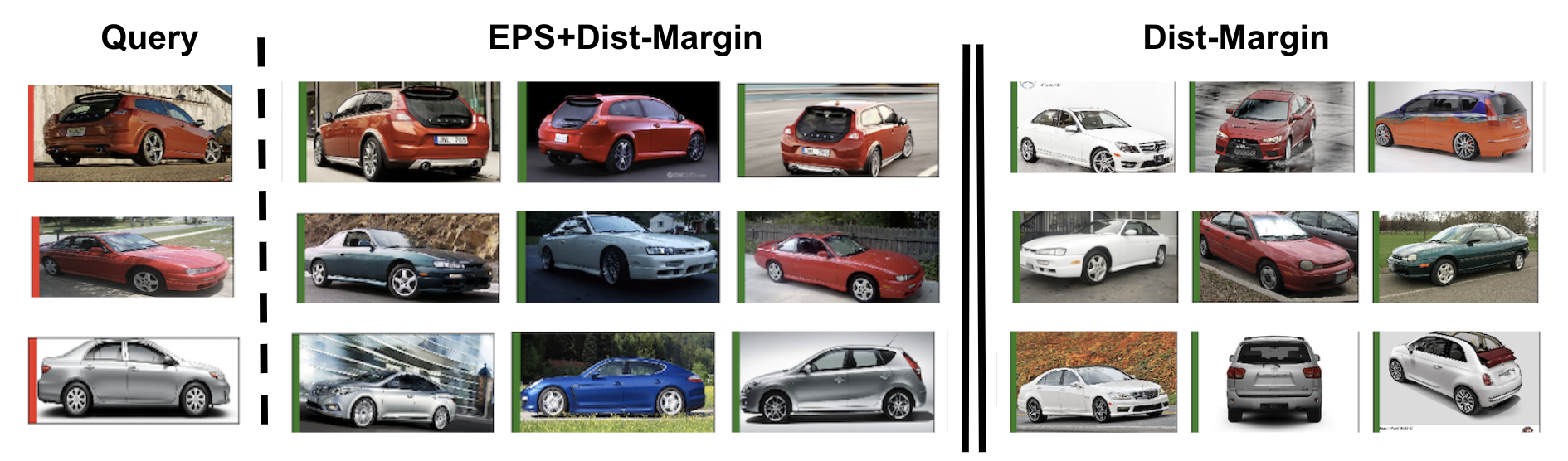}
    \caption{Retrieval results for randomly chosen query images in Cars196 dataset. Using EPS creates more homogeneous neighbourhood relationships with respect to the car viewpoint.} 
    \label{fig:carsEx}
    \end{center}
\end{figure*}

\begin{figure}[t!]
    \begin{center}
        \includegraphics[width=0.8\linewidth]{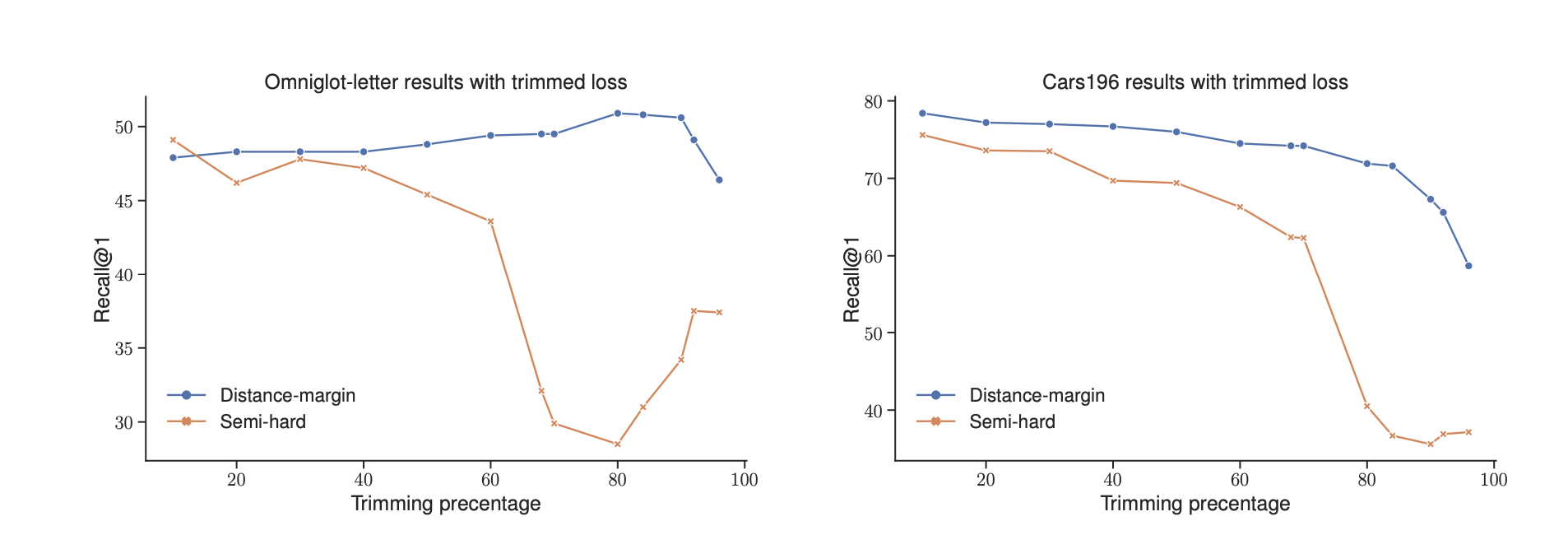}

    \caption{Recall@1 performance with Trimmed loss across varying trimming percentage. Except for small improvement in the Distance-margin case, there is no significant improvement when applying the Trimmed loss.} 
    \label{fig:trimmed}
    \end{center}
    \vspace{+10pt}
\end{figure}

Results are summarized in Tables \ref{table:1} and \ref{table:2}. We can see that EPS achieves the best performances. It is important to note that in the baseline models, when using Semi-hard sampling, the sampling strategy was done also on the positive part as suggest in the original papers. We see that replacing the semi-hard positive sampling with easy-positive sampling, improve results in all the experiments.  
The improvement gain becomes larger as the dataset classes can be partitioned more naturally to a small number of sub-clusters which are visually homogeneous. In Cars196 dataset it is the car viewpoint, where in Omniglot it is the letters in each language. As can be seen in Table \ref{table:2}, using EPS on the Omniglot dataset result in creating an embedding in which in most cases the nearest neighbor in the embedding consists of element of the same letter, although the network was trained without these labels. In Figure \ref{fig:carsEx} we can see a qualitatively comparison of CARS16 models results. EPS seems to create more homogeneous neighbourhood relationships with respect to the the viewpoint of the car. More results and comparisons can be find in Appendix C.


\begin{figure*}[t!]
    \begin{center}
        \includegraphics[width=0.9\linewidth]{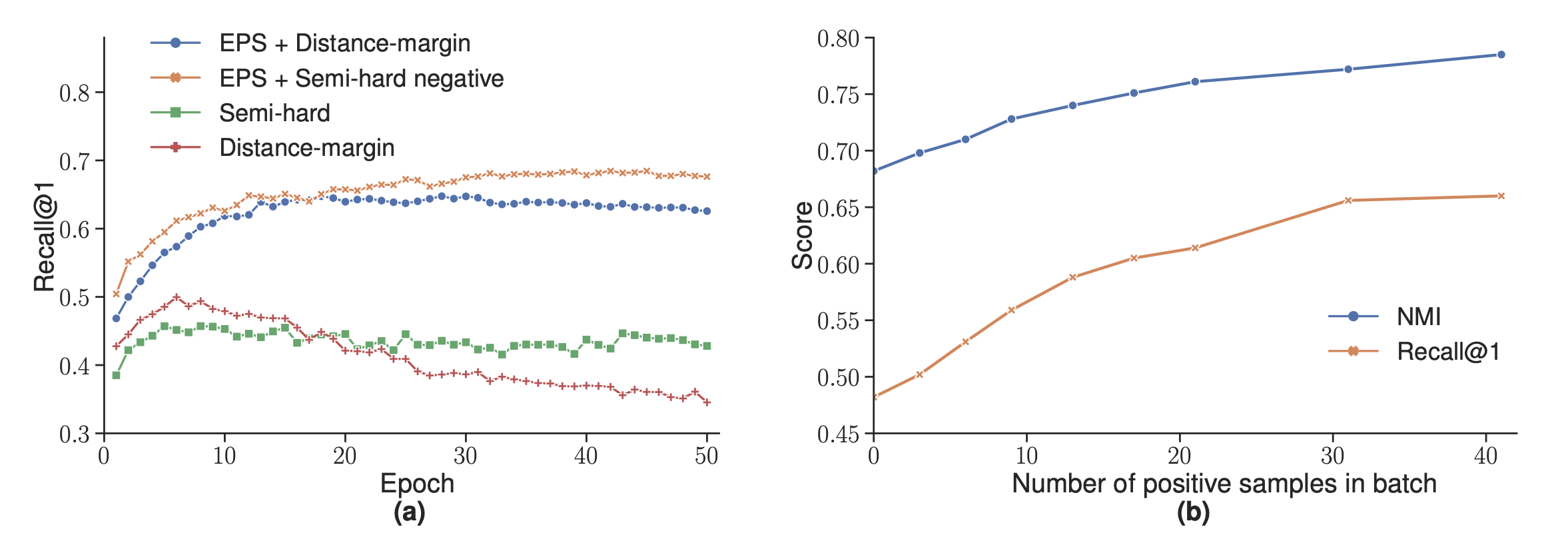}
    \caption{Results on Omniglot-letters. \textbf{(a)} Recall@1 performance of each model per epoch. \textbf{(b)} performance of  \textit{EPS + distance-margin} model on the Omniglot dataset, as a function of the number of positive samples in batch (where zero is equivalent to only using only distance sampling). Increasing the number the number of positive samples enhances the model performance.} 
    \label{fig:pos}
    \end{center}
\end{figure*}

\subsubsection{Positive batch size effect}
An important hyperparameter in our sampling method is the number of positive batch samples, from which we select the closest one in the embedding space to the anchor. If the class is visually diverse and the number of positive samples in batch is low, than with high probability the set of all the positive samples will not contain any visually similar image to the anchor. In case of the Omniglot experiment, the effect of this hyperparameter is clear; It determines the probability that the set of positive samples will include a sample from the same letter as the anchor letter. As can be seen in Figure \ref{fig:pos}(b), the performance of the model increases as the probability of having another sample with the same letter as the anchor increases.

\subsubsection{Trimmed Loss comparison}
The situation where a class consists of multiple modes can also be seen as a noisy data scenario with respect to the embedding loss, where positive tuples consisting of examples from different modes are considered as `bad` labelling. One approach to address noisy labels is by back-propagating the loss only on the k-elements in the batch with the lowest current loss~\cite{pmlr-v97-shen19e}. Although this approach resembles ~\cite{malisiewiczcvpr08}, the difference is that in~\cite{malisiewiczcvpr08} they apply the trimming only on the positive tuples. We test the effect of using Trimmed Loss on random sampled triplets with different level of trimming percentage. As can be seen in Figure \ref{fig:trimmed}, there is only a minor improvement when applying the loss on top of the distance-margin loss on the Omniglot-letters dataset. This emphasizes the importance of constraining the trimming to the positive sampling only.

\subsubsection{Embedding behavior on training sets}
\begin{table}[!t]
\centering

\tablestyle{3.75pt}{1.1}
\begin{tabular}{l|x{35}x{35}|x{35}x{35}|x{35}x{35}}
\multicolumn{1}{c|}{\multirow{2}{*}{}} & \multicolumn{2}{c|}{Language} & \multicolumn{2}{c|}{Letters} \\
& Semi-hard & Semi-hard+EPS & Semi-hard & Semi-hard+EPS  \\
\shline
NMI & \textbf{93.6} & 67.3 & 78.4 & \textbf{87.1}  \\
R@1 & \textbf{99.9} & 94.5 & 70.3 & \textbf{77.5}  \\
R@2 & \textbf{100} & 96.8 & 80.4 & \textbf{86.3}   \\
R@4 & \textbf{100} & 98.1 & 87.9 & \textbf{92.4}  \\
R@8 & \textbf{100} & 99.2   & 93.3 & \textbf{96.0}   \\
\end{tabular}

\caption{Results of semi-hard with/without EPS on the Omniglot training dataset. Without EPS the network feet almost perfectly to the training set. However, using EPS results in batter performances on the letters fine-grained task.   }
\label{omni-training}

\end{table}

\begin{figure*}
  \centering
  \begin{minipage}[b]{0.48\textwidth}
    \includegraphics[width=\textwidth]{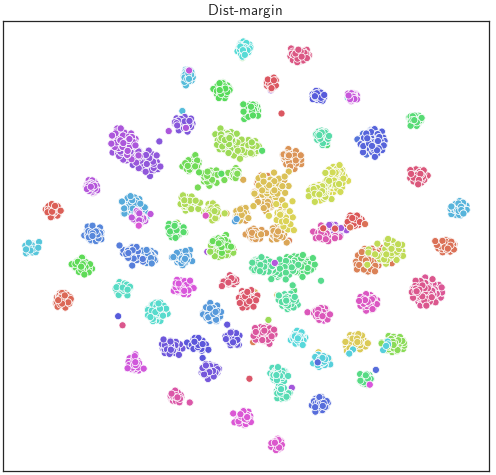}
  \end{minipage}
  \hfill
  \begin{minipage}[b]{0.48\textwidth}
    \includegraphics[width=\textwidth]{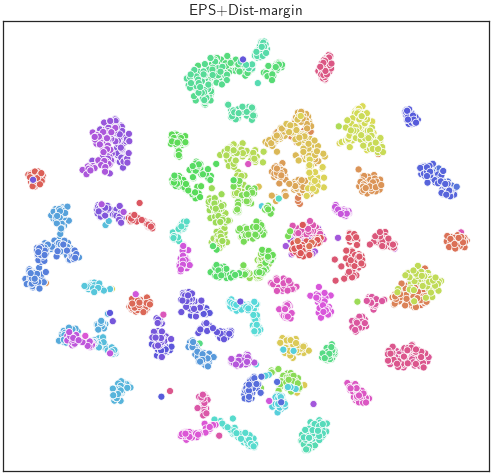}
  \end{minipage}
  
      \caption{t-SNE visualization of Cars196 training classes (each class has a different color). Training with EPS results in more diverse classes appearance.     }
      \label{fig:tsne}
\end{figure*}

The class-collapsing phenomena also occur in the training process of the image retrieval datasets. Figure \ref{fig:tsne} visualise the t-SNE embedding~\cite{maaten2008visualizing} of Cars196 training classes. As can be seen, when training without EPS each class fits well to a bivariate normal-distribution with small variance and different means. Training with EPS result in more diverse distributions and in some of the classes fits batter to a mixture of multiple different distributions. 

This can also be measured qualitatively on the Omniglod detest; although training without the EPS results in batter overfitting to training samples, the results on the letters fine-grained task are significantly inferior comparing to training with the EPS (Table \ref{omni-training}). It is also important to note the low NMI score when using EPS with the number of clusters equal to the number of languages, and the increment of this score when increasing the number of clusters to the number of letters. This indicates that training with EPS results in more homogeneous small clusters, which are more blended in the embedding space comparing to training without EPS.  

\section{Conclusion}
In this work we we investigate the class collapsing phenomena with respect to popular embedding losses such as the Triplet loss and the Margin loss. While in clean environments there is a diverse and rich family of optimal solutions, when noise is present, the optimal solution collapses to a degenerate embedding.
We propose a simple solution to this issue based on 'easy' positive sampling, and prove that indeed adding this sampling results in non-degenerate embeddings. We also compare and evaluate our method on standard image retrieval datasets and demonstrate a consistent performance boost on all of them. While our analysis and results have been limited to metric learning frameworks, we believe that this type of noisy analysis might be useful in other settings, and can better reflect the training dynamic of neural-networks on real-world datasets.

\textbf{Acknowledgements.} Prof. Darrell’s group was supported by BAIR.

\printbibliography

\section*{Appendix}
\subsection*{A:\quad  Proofs for the Theorems in subsection 4.2 }
We use all notions defined in subsection 4.2

\begin{theorem}[1] 
\label{Lagrange}
Let $f:X \xrightarrow{}\mathbb{R}^m$ be an embedding, which minimizes $ \mathbb{E}\mathbb{O}_{trip}(f)$, then $f$ has the class-collapsing property with respect to all classes.
\end{theorem}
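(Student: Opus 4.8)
The plan is to exploit the fact that, once the expectation over the noisy labels is taken, $\mathbb{E}\mathbb{O}_{trip}$ becomes a \emph{fixed} weighted sum of hinge terms in which every triple appears with strictly positive weight — the decisive difference from the clean case. By independence of the $Y_\ell$ and linearity of expectation,
\[
\mathbb{E}\mathbb{O}_{trip}(f)=\frac{1}{n^{3}}\sum_{i,j,k}c_{ijk}\,h^{f}(x_i,x_j,x_k),\qquad c_{ijk}:=\mathbb{E}\bigl(\overline{\delta}_{Y_i,Y_j}(1-\overline{\delta}_{Y_i,Y_k})\bigr)=\mathbb{P}(Y_i=Y_j\neq Y_k),
\]
and since each $Y_\ell$ has full support on the label set, $c_{ijk}>0$ for every triple with $k\notin\{i,j\}$ (and $c_{ijk}=0$ otherwise). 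A quick preliminary remark is that, unlike in the clean environment, the infimum is \emph{not} $0$: if $\mathbb{E}\mathbb{O}_{trip}(f)=0$ then $h^f$ vanishes on every distinct triple, and applying this to $(x_i,x_j,x_k)$ and to $(x_k,x_j,x_i)$ gives $D^f_{x_j,x_k}\ge D^f_{x_i,x_j}+\alpha$ and $D^f_{x_i,x_j}\ge D^f_{x_j,x_k}+\alpha$, which is impossible for $\alpha>0$. Thus we are genuinely minimizing a strictly positive, convex, piecewise-linear function of the squared-distance matrix $D=(D^f_{x_i,x_j})$.

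The heart of the proof is a variational computation at a minimizer $f$. Fix a true class $A_\ell$ and two of its points $x_a,x_b$. The scalar $D^f_{x_a,x_b}$ enters the objective only with a $+$ sign, in the terms $h^f(x_a,x_b,\cdot)$ and $h^f(x_b,x_a,\cdot)$, and only with a $-$ sign, in the terms $h^f(\cdot,x_a,x_b)$ and $h^f(\cdot,x_b,x_a)$. I would compute the one-sided derivative of $\mathbb{E}\mathbb{O}_{trip}$ as $D^f_{x_a,x_b}$ is decreased, grouping the triples according to the partition-classes of their three indices; in the two-label case the weights $c_{ijk}$ take only two values ($p^3+(1-p)^3$ for correctly-typed anchor/positive/negative patterns and $p(1-p)$ otherwise), and the general case is analogous. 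One then checks that the total weight of the ``contracting'' terms dominates that of the ``expanding'' terms — this is exactly where $p>1/2$ is used, the relevant surplus being proportional to $(2p-1)^2>0$. Hence decreasing $D^f_{x_a,x_b}$, and, carried out simultaneously over all pairs, contracting $f(A_\ell)$ to a single point, never increases the objective; since the objective is bounded below, a minimizer must already be constant on $A_\ell$, and repeating over all classes yields the class-collapsing property.

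The main obstacle is the non-smoothness of the hinge: the sign and magnitude of the one-sided derivative above depend on which hinge terms are active, which a priori depends on the inter-class distances of the unknown minimizer. I would resolve this by first reducing to a \emph{regular} minimizer. Since $\mathbb{E}\mathbb{O}_{trip}$ is convex in $D$, and both the weights $c_{ijk}$ and the realizability constraint on $D$ are invariant under every permutation of $X$ that preserves each $A_\ell$, averaging a minimizer over this symmetry group (assuming the embedding dimension is large enough that the set of realizable squared-distance matrices is convex) produces a minimizer whose distance matrix depends only on the pair of classes of its two indices. On such a regular configuration the active sets are governed by a single inter-class scalar, the derivative computation becomes elementary, and the same analysis simultaneously shows that separating two distinct classes strictly decreases the loss while spreading a single class does not — which is precisely the content of the theorem.
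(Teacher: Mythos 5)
Your starting point is sound, and the quantitative heart of your sketch is the same one the paper uses: after taking expectations every triple carries a strictly positive weight, the weights take the value $p^{3}+(1-p)^{3}$ on correctly-typed triples and $p(1-p)$ otherwise, and the surplus $(p^{3}+(1-p)^{3})-p(1-p)=(2p-1)^{2}>0$ is exactly what drives collapse. But the proof as written has two genuine gaps. First, the decisive inequality (``the total weight of the contracting terms dominates that of the expanding terms'') is asserted, not checked, and whether it holds for a one-sided perturbation of a single $D^{f}_{x_a,x_b}$ depends on which hinge terms are active at the unknown minimizer --- you acknowledge this and defer it, so the core of the argument is never actually carried out. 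Second, and more fundamentally, your logic only delivers weak monotonicity: ``contracting $f(A_\ell)$ never increases the objective, and the objective is bounded below, hence a minimizer must already be constant on $A_\ell$'' is a non sequitur, and your closing claim that ``spreading a single class does not strictly decrease the loss'' proves only that \emph{some} minimizer is collapsed. The theorem asserts that \emph{every} minimizer has the class-collapsing property, so you need strict improvement of any non-collapsed configuration (or an exact characterization of the minimizers). The paper obtains this by decomposing the objective per anchor $x_i\in A_r$, writing each anchor's contribution explicitly in terms of the two distances to a same-class and a different-class point, using $p>1/2$ to force the hinge-active regime $|h-w|\le\alpha$, and then deriving an \emph{iff} characterization: the per-anchor term is minimal iff all same-class points coincide with the anchor and all other-class points sit at distance exactly $\alpha$; since these per-anchor optima are simultaneously attainable, any global minimizer must satisfy all of them.

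The symmetrization device you propose to control the active sets does not close these gaps. Convexity of the set of realizable squared-distance matrices requires the embedding dimension to be on the order of $n$, an assumption absent from the theorem (and far stronger than what the paper implicitly needs, namely room for the $t$ collapsed class centers). Even granting it, averaging over the label-preserving permutation group only produces one symmetric minimizer; because the objective is piecewise linear in $D$ (convex but not strictly convex), equality in Jensen's inequality gives you no way to transfer conclusions about the symmetrized minimizer back to the arbitrary minimizer $f$ named in the statement. Finally, ``contracting $f(A_\ell)$ to a single point'' is not the superposition of your single-pair moves: collapsing a class unavoidably changes anchor-to-other-class distances as well, so the per-pair one-sided derivatives do not by themselves control the contraction path. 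Repairing the argument essentially forces you into the paper's route: a per-anchor decomposition with an explicit, exact identification of the unique per-anchor optimum, followed by a simultaneous-feasibility remark.
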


\begin{proof}

Define a new random variables such that for every $1 \leq r_1,r_2 \leq t$: \[h_{r_1,r_2}(Y,Z)=\begin{cases}
1 & Y=r_1\wedge Z=r_2\\
0 & else
\end{cases}\] observe that 
\[ \bar{\delta}_{Y_{1},Y_{2}}\cdot(1-\bar{\delta}_{Y_{1},Y_{3}})=\sum_{\underset{r1\neq r_{2}}{1\leq r_{1},r_{2}\leq t}}\mathbf{1}_{Y_{1}=r_{1}}\cdot h_{r_{1},r_{2}}(Y_{2},Y_{3})=\sum_{\underset{r1\neq r_{2}}{1\leq r_{1},r_{2}\leq t}}\mathbf{1}_{Y_{1}=r_{2}}\cdot h_{r_{1},r_{2}}(Y_{3},Y_{2})
 \]
Since the variables are independent \[\mathbb{E}(\bar{\delta}_{Y_{1},Y_{2}}\cdot(1-\bar{\delta}_{Y_{1},Y_{3}}))=\frac{1}{2}\cdot\sum_{\underset{r1\neq r_{2}}{1\leq r_{1},r_{2}\leq t}}\mathbb{E}(\mathbf{1}_{Y_{1}=r_{1}})\cdot\mathbb{E}(h_{r_{1},r_{2}}(Y_{2},Y_{3}))+\mathbb{E}(\mathbf{1}_{Y_{1}=r_{2}})\cdot\mathbb{E}(h_{r_{1},r_{2}}(Y_{3},Y_{2})).\] 
Define: $\bar{D}(x_{1},x_{2},x_{3}):=(D_{x_{1},x_{2}}-D_{x_{1},x_{3}}+\alpha)_{+}$

Rearranging the terms we get 

\begin{align*} & n^{3}\cdot\mathbb{E}\mathbb{O}_{trip}(f)=\sum_{x_{1},x_{2},x_{3}\in X}(\mathbb{E}(\bar{\delta}_{Y_{1},Y_{2}}\cdot(1-\bar{\delta}_{Y_{1},Y_{3}}))\cdot\bar{D}(x_{1},x_{2},x_{3})=\\
 & \sum_{\underset{1\le r_{1}\neq r_{2}\leq t}{x_{1},x_{2},x_{3}\in X}}\left(\mathbb{E}(\mathbf{1}_{Y_{1}=r_{1}})\cdot\mathbb{E}(h_{r_{1},r_{2}}((Y_{2},Y_{3}))+\mathbb{E}(\mathbf{1}_{Y_{1}=r_{2}})\cdot\mathbb{E}(h_{r_{1},r_{2}}(Y_{3},Y_{2}))\right)\cdot\bar{D}(x_{1},x_{2},x_{3})=\\
 & \sum_{\underset{1\le r_{1}\neq r_{2}\leq t}{x_{1},x_{2},x_{3}\in X}}\mathbb{E}(h_{r_{1,},r_{2}}(Y_{2},Y_{3}))\cdot\left(\mathbb{E}(\mathbf{1}_{Y_{1}=r_{1}})\cdot\bar{D}(x_{1},x_{2},x_{3})+\mathbb{E}(\mathbf{1}_{Y_{1}=r_{2}})\cdot\bar{D}(x_{1},x_{3},x_{2})\right)=
\end{align*}

Therefore, if \[K(i,j,k,r_{1},r_{2})=\left(\mathbb{E}(\mathbf{1}_{Y_{1}=r_{1}})\cdot\bar{D}(x_{i},x_{j},x_{k})+\mathbb{E}(\mathbf{1}_{Y_{1}=r_{2}})\cdot\bar{D}(x_{1},x_{k},x_{j})\right),\]
then $ \mathbb{E}\mathbb{O}_{trip}(f)$ can be written as
\[\mathbb{E}\mathbb{O}_{trip}(f)=\frac{1}{n^{3}}\sum_{\underset{1\le r_{1}\neq r_{2}\leq t}{1\leq i,j,k\leq n}}\mathbb{E}(h_{r_{1},r_{2}}(Y_{j},Y_{k}))\cdot K(i,j,k,r_{1},r_{2})\]

For every $x_i\in X$, define:  \[(\mathbb{E}\mathbb{O}_{trip}(f))_{x_{i}}=\frac{1}{n^{2}}\cdot\sum_{\underset{1\le r_{1}\neq r_{2}\leq t}{1\leq j,k\leq n}}(\mathbb{E}(h_{r_{1},r_{2}}(Y_{j},Y_{k}))\cdot K(x_{i},x_{j},x_{k},r_{1},r_{2})\]
Let $f : X \xrightarrow{}\mathbb{R}^m$ be an embedding, fix $1\leq r\leq t$ and $x_i\in A_r,\quad
x_j,x_k\in X$ with \[ \Arrowvert\ f(x_i)- f(x_j) \Arrowvert\ = w,\quad \Arrowvert\ f(x_i)- f(x_k) \Arrowvert\ = h .\]
By definition: \[K(i,j,k,r_{1},r_{2})=\begin{cases}
p\cdot(h-w+\alpha)_{+}+q(w-h+\alpha)_{+} & r_{1}=r\wedge r_{2}\neq r\\
q\cdot(h-w+\alpha)_{+}+p(w-h+\alpha)_{+} & r_{2}=r\wedge r_{1}\neq r\\
p\cdot(h-w+\alpha)_{+}+p(w-h+\alpha)_{+} & r_{1}=r\wedge r_{2}=r\\
q\cdot(h-w+\alpha)_{+}+q(w-h+\alpha)_{+} & r_{1}\neq r\wedge r_{2}\neq r
\end{cases}\]
Since $0.5<p<1$, in order to get minimal $K(i,j,k,r_1,r_2)$ value, $h$ and $w$ must satisfy $|h-w|\leq\alpha$. In this case we have \[K(i,j,k,r_1,r_2)=\begin{cases}
(p+q)\cdot\alpha+(h-w)(p-q) & r_{1}=r\wedge r_{2}\neq r\\
(p+q)\cdot\alpha+(w-h)(p-q) & r_{2}=r\wedge r_{1}\neq r\\
2\cdot\alpha & r_{1}=r\wedge r_{2}=r\\
2\cdot\alpha & r_{1}\neq r\wedge r_{2}\neq r
\end{cases}\]
Therefore,
\begin{align*}
 & \sum_{r_{2}\in\{1,.r-1,r+1,.t\}}(\mathbb{E}(h_{r,r_{2}}(Y_{j},Y_{k}))\cdot K(x_{i},x_{j},x_{k},r_{1},r_{2})+(\mathbb{E}(h_{r_{2},r}(Y_{j},Y_{k}))\cdot K(x_{i},x_{j},x_{k},r_{1},r_{2})=\\
 & =(p+q)\cdot\alpha(\sum_{r_{2}\in\{1,.r-1,r+1,.t\}}(\mathbb{E}(h_{r,r_{2}}(Y_{j},Y_{k})+(\mathbb{E}(h_{r_{2},r}(Y_{j},Y_{k})))+\\
 & (h-w)(p-q))(\sum_{r_{2}\in\{1,.r-1,r+1,.t\}}\mathbb{E}(h_{r,r_{2}}(Y_{j},Y_{k}))-\mathbb{E}(h_{r_{2},r}(Y_{j},Y_{k}))
\end{align*}

\par
We split to three cases:
\begin{enumerate}
\item If $x_j,x_k\in A_r$ or $x_j,x_k\notin A_r$ then: $\mathbb{E}(h_{r,r_{2}}(Y_{j},Y_{k}))=\mathbb{E}(h_{r_{2},r}(Y_{j},Y_{k}))$.
Hence, \[(h-w)(p-q))(\sum_{r_{2}\in\{1,.r-1,r+1,.t\}}\mathbb{E}(h_{r,r_{2}}(Y_{j},Y_{k}))-\mathbb{E}(h_{r_{2},r}(Y_{j},Y_{k}))=0\]
\item If $x_j\in A_r$ and $x_k\notin A_r$, then $\mathbb{E}(h_{r,r_{2}}(Y_{j},Y_{k}))>\mathbb{E}(h_{r_{2},r}(Y_{j},Y_{k}))$,
therefore \[(h-w)(p-q))(\sum_{r_{2}\in\{1,.r-1,r+1,.t\}}\mathbb{E}(h_{r,r_{2}}(Y_{j},Y_{k}))-\mathbb{E}(h_{r_{2},r}(Y_{j},Y_{k}))\]

 Since $p>0.5$ and $|h-w|\leq\alpha$,the minimal value is achieved whenever $h=0$
and $w=\alpha$.
 \item In the same way if  $x_k\in A_r$ and $x_j\notin A_r$, then
 $\mathbb{E}(h_{h_{r_{2},r}}(Y_{j},Y_{k}))=\mathbb{E}(h_{r,r_{2}}(Y_{j},Y_{k}))$ and the minimal value is achieved whenever $h=\alpha$ and $w=0$.
\end{enumerate}
In conclusion, if $x_i\in A_r$, an embedding $f^{*}$ satisfies \[(\mathbb{E}\mathbb{O}_{trip}(f^{*}))_{x_i}= \min\{(\mathbb{E}\mathbb{O}_{trip}(f))_{x_i} | f : X \xrightarrow{}\mathbb{R}^m\}\]
iff $f^*(x_j)=f^*(x_i)$ for every $x_j\in A_r$, and $ \Arrowvert\ f^*(x_j)-f^*(x_i) \Arrowvert\ =\alpha$
for every $x_j\notin A_r$.
\end{proof}

We will now prove the same theorem with respect to the margin loss.

\begin{theorem}[2]
\label{Lagrange}
Let $f: X \xrightarrow{}\mathbb{R}^m$ be an embedding, which minimizes \[\mathbb{E}\mathbb{O}_{margin}(f,\beta)=\frac{1}{n^{2}}\sum_{x_{i},x_{j}\in X}\mathbb{E}\mathcal{L}_{margin}^{f}(x_{i},x_{j}),\] then $f$ has the class-collapsing property with respect to all classes.
\end{theorem}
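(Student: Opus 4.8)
The plan is to mirror the proof of Theorem 1, adapted to the pairwise form of the margin loss. The first step is to collapse the expected per-pair loss to a one-parameter object. Writing $\bar{\delta}_{Y_i,Y_j}=\sum_{r=1}^{t}\mathbf{1}_{Y_i=r}\,\mathbf{1}_{Y_j=r}$ and using independence of the $Y$'s, the only probabilistic quantity that survives is $s_{ij}:=\mathbb{E}\bar{\delta}_{Y_i,Y_j}=\sum_{r}\mathbb{P}(Y_i=r)\,\mathbb{P}(Y_j=r)$, which takes exactly two values: $s_{ij}=p^{2}+(t-1)q'^{2}$ when $x_i,x_j$ lie in a common $A_r$, and $s_{ij}=2pq'+(t-2)q'^{2}$ otherwise. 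A one-line computation gives that the difference of these numbers equals $(p-q')^{2}>0$, so same-set pairs carry strictly more weight; I would also record that, in the regime of the theorem, the same-set value lies above $\tfrac12$ and the different-set value below $\tfrac12$. With this, $\mathbb{E}\mathcal{L}^{f}_{margin}(x_i,x_j)=\phi_{s_{ij}}\!\bigl(D^{f}_{x_i,x_j},\beta_{x_i}\bigr)$ where $\phi_{s}(D,\beta):=s\,(D-\beta+\alpha)_{+}+(1-s)\,(\beta-D+\alpha)_{+}$ is, for fixed $\beta$, convex and piecewise linear in $D$ with successive slopes $-(1-s)$, $2s-1$, $s$ across the kinks $D=\beta-\alpha$ and $D=\beta+\alpha$.

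Next, exactly as in the proof of Theorem 1, split the objective by anchor: $\mathbb{E}\mathbb{O}_{margin}(f,\beta)=\tfrac1n\sum_{x_i}\bigl(\mathbb{E}\mathbb{O}_{margin}(f,\beta)\bigr)_{x_i}$ with $\bigl(\mathbb{E}\mathbb{O}_{margin}(f,\beta)\bigr)_{x_i}=\tfrac1n\sum_{x_j}\phi_{s_{ij}}(D^{f}_{x_i,x_j},\beta_{x_i})$. The structural observation is that $\beta_{x_i}$ occurs only in the $x_i$-block and that this block depends on $f$ only through the distances $\{D^{f}_{x_i,x_j}\}_j$; so one may bound each block from below by minimizing over $\beta:=\beta_{x_i}$ and over the $D_j:=D^{f}_{x_i,x_j}\ge 0$ treated as free nonnegative reals, and then check the bound is met by an honest embedding. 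Fixing an anchor $x_i\in A_r$ and using the slope description: a same-set pair ($s>\tfrac12$) is minimized at $D_j=(\beta-\alpha)_{+}$, a different-set pair ($s<\tfrac12$) at $D_j=\beta+\alpha$, and the diagonal term $j=i$ (where $s_{ii}=1$, $D^f_{x_i,x_i}=0$) equals $(\alpha-\beta)_{+}$. Substituting back, the block is minimized only when $\beta_{x_i}\ge\alpha$, and then $D^{f}_{x_i,x_j}=\beta_{x_i}-\alpha$ for every $x_j\in A_r$ and $D^{f}_{x_i,x_j}=\beta_{x_i}+\alpha$ for every $x_j\notin A_r$; in particular, at the value $\beta_{x_i}=\alpha$ this reads $D^{f}_{x_i,x_j}=0$ for $x_j\in A_r$, i.e. $f(x_j)=f(x_i)$, and $D^{f}_{x_i,x_j}=2\alpha$ otherwise. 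Since the collapsed embedding (the per-class points forming a regular simplex of squared edge $2\alpha$, together with $\beta\equiv\alpha$) realizes exactly this profile, it attains the lower bound and is therefore a minimizer; and since at any global minimizer every anchor block must attain its lower bound, the above distance identities are forced at every anchor.

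The step I expect to be the real obstacle is showing that these identities already entail collapse — equivalently, that the common value $\beta_{x_i}$ must equal $\alpha$ and not something larger. The difficulty is that along $\beta\ge\alpha$ the same-set per-pair value $2\alpha(1-s)$ and the different-set per-pair value $2\alpha s$ are both $\beta$-independent, so the anchor-by-anchor analysis by itself leaves an apparent slab of candidate minimizers — the ``regular-simplex-per-class'' configurations with a common $\beta>\alpha$ — that are not collapsed. Ruling these out cannot be done one anchor at a time: a single distance $D^{f}_{x_i,x_j}$ is shared between the $x_i$-block and the $x_j$-block, so $\{\beta_{x_i}\}$ and the induced distance matrix must be jointly Euclidean-realizable. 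Concretely I would (i) observe that two anchors $x_i,x_j$ in a common $A_r$ satisfy $\beta_{x_i}-\alpha=D^{f}_{x_i,x_j}=\beta_{x_j}-\alpha$, so all $\beta_{x_i}$ over a fixed class agree; (ii) propagate equality across classes via $\beta_{x_i}+\alpha=D^{f}_{x_i,x_j}=\beta_{x_j}+\alpha$ for $x_i,x_j$ in distinct classes; and (iii) close the argument by excluding the residual freedom $\beta>\alpha$, using the optimality of the diagonal terms together with the constraints the hypothesis space places on simultaneously realizing all these distances. Step (iii) is the crux and is where the proof has to work hardest; once it is in hand, the ``iff'' characterization of the minimizer closes exactly as at the end of the proof of Theorem 1, and $f$ has the class-collapsing property with respect to both classes.
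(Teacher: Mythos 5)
Your reduction is essentially the paper's: compute the expected per-pair loss, note that the same-class coefficient exceeds $\tfrac12$ while the cross-class one is below it (the paper writes these simply as $p$ and $1-p$ in the binary case), observe that each pair's minimum lies in the regime where both hinges are active, and then minimize anchor-by-anchor over $\beta_{x_i}$ and the distances, arriving at $D^{f}_{x_i,x_j}=\beta_{x_i}-\alpha$ for positive pairs and $\beta_{x_i}+\alpha$ for negative pairs. Up to that point you match the paper. The problem is that you have not finished the proof: your step (iii), which you yourself call the crux, is precisely the statement to be established, and you give no argument for it. Moreover, the two ingredients you suggest cannot close it. The diagonal term $(\alpha-\beta_{x_i})_+$ vanishes identically for every $\beta_{x_i}\ge\alpha$, so it cannot separate $\beta_{x_i}=\alpha$ from $\beta_{x_i}>\alpha$; and Euclidean realizability does not exclude the shifted configurations either: for $m$ large enough one can place each class as a regular simplex of squared side $\beta_0-\alpha$ in mutually orthogonal subspaces, with centers separated so that every cross-class squared distance equals $\beta_0+\alpha$, and with $\beta\equiv\beta_0$ this attains exactly the same per-pair values $2\alpha(1-s_{\mathrm{same}})$ and $2\alpha s_{\mathrm{diff}}$ as the collapsed solution. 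So the slab of non-collapsed candidates you identified is not ruled out by anything in your sketch.

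For comparison, the paper's own proof takes the same route --- it restricts to $|D_{x_i,x_j}-\beta_{x_i}|\le\alpha$, linearizes, reduces to $(2p-1)\bigl(\sum_{\text{same}}D-\sum_{\text{diff}}D\bigr)$, and then simply asserts that the minimum forces $D_{\text{same}}=0$, $D_{\text{diff}}=2\alpha$ and $\beta_{x_i}=\alpha$, never addressing the flat direction $\beta_{x_i}>\alpha$ that you correctly flagged. So you have reproduced the paper's argument and located its weak point, but a complete proof needs an ingredient that neither you nor the paper supplies: for instance, treating $\beta\equiv\alpha$ as fixed (as in the clean-environment discussion preceding the theorem), adding the usual $\nu\beta$ regularizer of the margin loss, which strictly penalizes $\beta_{x_i}>\alpha$ and so kills the flat family, or weakening the conclusion to the existence of a class-collapsed minimizer. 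As written, your proposal is an honest but incomplete reconstruction of the paper's proof, with the decisive step missing.
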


\begin{proof}

Observe that if $x_{i},x_{j}\in A_r$, then 
 \[\mathbb{E}\mathcal{L}_{margin}^{f}(x_{i},x_{j})=p\cdot(D_{x_{i},x_{j}}-\beta_{x_{i}}+\alpha)_{+}+(1-p)\cdot(\beta_{x_{i}}-D_{x_{i},x_{j}}+\alpha)_{+}\]
Since $0<p<1$, then the maximal value is achieved whenever $|D_{x_{i},x_{j}}-\beta_{x_{i}}|\leq\alpha$,
in this case:
\[\mathbb{E}\mathcal{L}_{margin}^{f}(x_{i},x_{j})=(2p-1)\cdot(D_{x_{i},x_{j}}-\beta_{x_{i}}).\]
In the same way in case $x_{i}\in A_r$ and $x_{j}\notin A_r$ then:
\[\mathbb{E}\mathcal{L}_{margin}^{f}(x_{i},x_{j})=(2p-1)\cdot(\beta_{x_{i}}-D_{x_{i},x_{j}}).\]
Combining both directions we get:
\[\sum_{x_{j}\in X}\mathbb{E}\mathcal{L}_{margin}^{f}(x_{i},x_{j})=(2p-1)\cdot\left(\sum_{Y_{j}\in A}D_{x_{i},x_{j}}-\sum_{Y_{j}\notin A}D_{x_{i},x_{j}}\right)\]
Since: $p>0.5$ and $|D_{x_{i},x_{j}}-\beta_{x_{i}}|\leq\alpha$, the minimal value is achieved whenever $D_{x_{i},x_{j}}=0$, $D_{x_{i},x_{k}}=2\alpha$ and $\beta_{x_{i}}=\alpha$, for every $x_{i},x_{j}\in A_r,\quad x_{k}\notin A_r$.
\end{proof}

\subsection*{B:\quad  Easy Positive Sampling in noisy environment}
In this subsection we analyse the EPS method from the theoretical prospective, using the framework defined in Section 4. We use the same notions as in sections 3 and 4.

Define: $\Phi(y_{i},y_{j})=\begin{cases}
1 & y_{i}=y_{j}\wedge D_{x_{i},x_{j}}=min\{D_{x_{i},x_{k}}|\,y_{k}=y_{i}\}\\
0 & else
\end{cases}$. Then, the easy positive sampling loss can be defined by: \[\frac{1}{n}\sum_{1\leq i,j,k\leq n}\Phi(y_{i},y_{j})\cdot\mathcal{L}{}_{trip}^{f}(x_{i},x_{j},x_{k})\]
for the triplet loss and \[\frac{1}{n}\sum_{1\leq i,j\leq n}(\Phi(y_{i},y_{j})\cdot\mathcal{L}{}_{margin}^{f,\beta}(x_{i},x_{j}))+1_{y_{i}\neq y_{j}}\mathcal{L}{}_{margin}^{f,\beta}(x_{i},x_{j})\]
for the margin loss.

In the noisy environment stochastic case, using section 4 notions, $\Phi$ becomes a random variable:\[\bar{\Phi}(Y_{i},Y_{j})=\begin{cases}
1 & Y_{i}=Y_{j}\wedge\forall t\,((D_{x_{i,},x_{t}}<D_{x_{i},x_{j}})\rightarrow Y_{t}\neq Y_{i})\\
0 & else
\end{cases}\] Therefore, the triplet loss with EPS in the noisy environment case, becomes:
\[
 \mathbb{E}\mathcal{L}_{EPStrip}^{f}(x_{i},x_{j},x_{k})=\mathbb{E}\left(\bar{\Phi}(Y_{i},Y_{j})\cdot\bar{\delta}_{Y_{i},Y_{j}}\cdot(1-\bar{\delta}_{Y_{i},Y_{k}})\right)\cdot\left(D_{x_{i},x_{j}}^{f}-D_{x_{i},x_{k}}^{f}+\alpha\right)_{+}\\
\]
and for the margin loss with EPS we have: 
\[
 \mathbb{E}\mathcal{L}_{EPSmargin}^{f}(x_{i},x_{j})=\mathbb{E}(\bar{\Phi}(Y_{i},Y_{j})\cdot\bar{\delta}_{Y_{i},Y_{j}})\cdot(D_{x_{i},x_{j}}^{f}-\beta_{x_{i}}+\alpha)_{+}+\mathbb{E}(1-\bar{\delta}_{Y_{i},Y_{j}})\cdot(\beta_{x_{i}}-D_{x_{i},x_{j}}^{f}+\alpha)_{+}
\]

As in section 4.2 we assume that $Y = \{Y_1,..,Y_n\}$ is a set of independent binary random variables. Let $A_1,..,A_t\subset X$, $0.5<p<1$ such that: $|A_j| = \frac{n}{t}$ and \[\mathbb{P}(Y_i=k)= \begin{cases}
p & x_i\in A_k\\
q':=\frac{1-p}{t-1} & x_i\notin A_k
\end{cases} \]
For simplicity we  assume that every  $1\leq i \leq \frac{n}{t}$ satisfies $x_{\frac{n\cdot i}{t}+1},..,x_{\frac{n\cdot i}{t}+t}\in A_i$

We prove first that the minimal embedding with respect to both
losses does not satisfy the class collapsing property. 
Let  $f_{1}$ be an embedding function such that: \[D_{x_{i},x_{j}}^{f_{1}}=\begin{cases}
0 & (\exists r)(x_{i},x_{j}\in A_{r})\\
\alpha & else
\end{cases}\] and $f_{2}$ an embedding such that: \[D_{x_{1},x_{2}}^{f_{1}}=\begin{cases}
0 & (\exists r)(x_{i},x_{j}\in A_{r})\wedge\sim((i<\frac{t}{2n}\wedge j>\frac{t}{2n})\vee(i>\frac{t}{2n}\wedge j<\frac{t}{2n})\\
\alpha & else
\end{cases}\]  $f_{1}$ represent the case of class collapsing, where $f_{2}$
represent the case where there are two modalities for the first class. In
order to show that the minimal embedding does not satisfy the class
collapsing property it suffices to prove that \[\frac{1}{n}\sum_{1\leq i,j,k\leq n}\mathbb{E}\mathcal{L}_{EPStrip}^{f_{2}}(x_{i},x_{j},x_{k})<\frac{1}{n}\sum_{1\leq i,j,k\leq n}\mathbb{E}\mathcal{L}_{EPStrip}^{f_{1}}(x_{i},x_{j},x_{k})\]
and \[\frac{1}{n}\sum_{1\leq i,j\leq n}\mathbb{E}\mathcal{L}_{EPSmargin}^{f_{2}}(x_{i},x_{j})<\frac{1}{n}\sum_{1\leq i,j\leq n}\mathbb{E}\mathcal{L}_{EPSmargin}^{f_{1}}(x_{i},x_{j}).\]
\textbf{Remark:} For both losses the definition requires a strict order between
the elements, therefore by distance zero, we meant infinitesimal close,
the order between the elements inside the sub-clusters is random, and
element between set $A_1$ are closer then set $A_1^c$ in both embeddings. For
simplification we neglect this infinitesimal constants in the proofs.
\theoremstyle{claim}
\begin{claim}{}
There exists $M$ such that if $n\geq M$, then: \[\frac{1}{n}\sum_{1\leq i,j,k\leq n}\mathbb{E}\mathcal{L}_{EPStrip}^{f_{2}}(x_{i},x_{j},x_{k})<\frac{1}{n}\sum_{1\leq i,j,k\leq n}\mathbb{E}\mathcal{L}_{EPStrip}^{f_{1}}(x_{i},x_{j},x_{k})\]
\end{claim}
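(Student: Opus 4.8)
The plan is to exploit that $f_{1}$ and $f_{2}$ induce the same distances from every anchor outside $A_{1}$: if $x_{i}\notin A_{1}$ then $D^{f_{1}}_{x_{i},x}=D^{f_{2}}_{x_{i},x}$ for all $x$, so $\bar{\Phi}$, $\bar{\delta}$ and the hinge factor all agree, and the summand $\mathbb{E}\mathcal{L}^{f}_{EPStrip}(x_{i},x_{j},x_{k})$ is the same for $f_{1}$ and $f_{2}$. Hence the difference of the two objectives is carried entirely by the triplets whose anchor lies in $A_{1}$, and it suffices to estimate that part. Write $A_{1}=B\sqcup B'$ for the two index halves of $A_{1}$ used to define $f_{2}$, so $|B|=|B'|=n/(2t)$, $D^{f_{2}}$ is infinitesimal within $B$ and within $B'$ and equals $\alpha$ between them, while under $f_{1}$ all of $A_{1}$ is a single infinitesimal blob.

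The heart of the argument is a per-anchor estimate. Fix $x_{i}\in A_{1}$ and let $\mathrm{bl}(x_{i})$ be the infinitesimal blob containing it: $A_{1}\setminus\{x_{i}\}$ under $f_{1}$, and $B\setminus\{x_{i}\}$ or $B'\setminus\{x_{i}\}$ under $f_{2}$. Let $E_{i}$ be the event that some element of $\mathrm{bl}(x_{i})$ carries the label $Y_{i}$; since $\mathrm{bl}(x_{i})$ has $\Theta(n)$ members, each getting label $Y_{i}$ with probability $p$ or $q'$ (both fixed and positive), $\mathbb{P}(E_{i}^{c})$ is exponentially small in $n$. On $E_{i}$ the unique EPS positive $x_{j^{\ast}}$ of $x_{i}$ lies in $\mathrm{bl}(x_{i})$, so $D_{x_{i},x_{j^{\ast}}}$ is infinitesimal; since $\sum_{j}\bar{\Phi}(Y_{i},Y_{j})\,(D_{x_{i},x_{j}}-D_{x_{i},x_{k}}+\alpha)_{+}$ equals the term with $j=j^{\ast}$, the loss with anchor $x_{i}$ is $\mathbb{E}\!\left[\sum_{k}\mathbf{1}_{Y_{i}\neq Y_{k}}\,(D_{x_{i},x_{j^{\ast}}}-D_{x_{i},x_{k}}+\alpha)_{+}\right]$. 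On $E_{i}$, and neglecting infinitesimals as in the Remark, this inner sum collapses to $\alpha\cdot|\{k\neq i:\ x_{k}\in\mathrm{bl}(x_{i}),\ Y_{k}\neq Y_{i}\}|$, because a negative inside the blob contributes $\alpha$ whereas a negative at distance $\alpha$ contributes $0$; the part on $E_{i}^{c}$ is at most $2\alpha n\,\mathbb{P}(E_{i}^{c})=o(1)$. So the loss with anchor $x_{i}$ equals $\alpha\,\mathbb{E}\,|\{k\neq i:\ x_{k}\in\mathrm{bl}(x_{i}),\ Y_{k}\neq Y_{i}\}|+o(1)$.

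Summing over $x_{i}\in A_{1}$ and writing $\rho:=\mathbb{P}(Y_{k}\neq Y_{i})$ for two distinct elements of $A_{1}$ — the same value for two distinct elements of $B$ or of $B'$, as these have identical marginals and are independent, and $\rho\geq pq'>0$ — we get: under $f_{1}$, $\alpha\,(n/t)(n/t-1)\,\rho+o(n^{2})$; under $f_{2}$, $\alpha\big(|B|(|B|-1)+|B'|(|B'|-1)\big)\rho+o(n^{2})=\tfrac12\alpha\,(n/t)^{2}\rho+o(n^{2})$. After dividing by $n$, the difference of the two objectives is $\tfrac{1}{2t^{2}}\alpha\rho\,n+o(n)$, which is strictly positive for all $n\geq M$ with $M$ depending only on $t,p,\alpha$; this is the claim. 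The main obstacle is not the final asymptotic comparison, which is then immediate, but making the per-anchor estimate precise — in particular verifying that on the dominant event $E_{i}$ the EPS positive really lies in $\mathrm{bl}(x_{i})$, that each mislabelled negative inside $\mathrm{bl}(x_{i})$ contributes exactly $\alpha$ modulo the neglected infinitesimals, and that all remaining configurations (negatives at distance $\alpha$, and the rare event $E_{i}^{c}$) are of strictly lower order in $n$.
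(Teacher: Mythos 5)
Your proof is correct, and it rests on the same two probabilistic pillars as the paper's: (i) only anchors in $A_1$ can produce a difference, since from any other anchor the distances, hence $\bar{\Phi}$, the indicators and the hinge, coincide for $f_1$ and $f_2$; and (ii) the event that the anchor's blob contains no same-label element is exponentially unlikely, so the EPS positive sits at infinitesimal distance and the per-anchor loss reduces to $\alpha$ times the number of differently-labelled points inside the blob. Where you diverge is in the bookkeeping: the paper fixes a single anchor $x_1$, sorts the remaining points by distance, and compares the two losses cell-by-cell over index pairs $(j,k)$ — terms with $j\leq \frac{n}{2t}$ and $k\leq\frac{n}{2t}$ or $k>\frac{n}{t}$ cancel exactly, the terms where only $f_2$ could pay (EPS positive escaping the half, $j>\frac{n}{2t}$) are bounded by the geometric tail $\sum_{j>n/2t}4\alpha p^{j-1}\to 0$, and the gain region $j\leq\frac{n}{2t}<k\leq\frac{n}{t}$ is bounded below by $\alpha q^{2}\sum_j q^{j}$, after which $M$ is chosen by comparing the two explicit expressions. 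You instead condition on the good event $E_i$ per anchor and compute the leading-order \emph{value} of each objective, $\alpha\rho$ times the expected count of mislabelled points in the blob, obtaining $\tfrac{\alpha\rho}{t^2}n$ versus $\tfrac{\alpha\rho}{2t^2}n$ after normalization. Your route buys explicit asymptotic constants for both sides and a transparent statement of the anchor reduction (which the paper only uses implicitly via symmetry in $x_1$), at the cost of needing the slightly more careful crude bound $2\alpha n\,\mathbb{P}(E_i^c)$ on the bad event; the paper's region-wise difference argument avoids computing absolute values but only certifies positivity of the gap rather than its size. Both are valid, and your per-anchor identity on $E_i$ is exactly the mechanism the paper's favorable region ($f_1$ pays $\alpha$ with probability roughly $p^2q^{j-1}+q^2p^{j-1}$, $f_2$ pays $0$) encodes term by term.
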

\begin{proof}
Fix $x_{1}$, WOLOG we may assume in both embeddings that $D_{x_{1},x_{i}}^{f_{j}}<D_{x_{1},x_{k}}^{f_{j}}$
for every $j\in\{1,2\}$ and $1\leq i<k\leq n$. It suffices to prove
that \[\frac{1}{n}\sum_{1\leq j,k\leq n}(\mathbb{E}\mathcal{L}_{EPStrip}^{f_{1}}(x_{1},x_{j},x_{k})-\mathbb{E}\mathcal{L}_{EPStrip}^{f_{2}}(x_{1},x_{j},x_{k}))>0\] Let $q=(1-p)$, observe that \[\mathbb{P}(\bigwedge_{1\leq t<j}Y_{i}\neq Y_{t})=p^{m+1}\cdot q^{j-2-m}+p^{j-2-m}q^{j+1}\leq2p^{j-1}\]
where $m=|\{t\,|\,t\leq j,\,\,Y_{t}\in A_1\}|$ . Thus if $j\geq\frac{n}{2t}$,
we have
\[
\mathbb{E}\mathcal{L}_{EPStrip}^{f_{2}}(x_{1},x_{j},x_{k})  \leq \mathbb{P}(\bigwedge_{1\leq t<j}Y_{i}\neq Y_{t})\cdot2\cdot\alpha\leq4\cdot\alpha p^{j-1}.\] Therefore, 
\[
\frac{1}{n}\sum_{j>\frac{n}{2t},1\leq k\leq n}\mathbb{E}\mathcal{L}_{EPStrip}^{f_{2}}(x_{1},x_{j},x_{k})  \leq\sum_{j>\frac{n}{2t}}4\cdot\alpha p^{j}=4\cdot n\cdot\alpha p^{\frac{n}{2t}}\cdot\sum_{j=0}^{\frac{n(2t-1)}{2t}}p^{j}=4\cdot\alpha p^{\frac{n}{2t}}\cdot\frac{1-q^{n(2t-1)/2t}}{1-q}\stackrel{n\rightarrow\infty}{\rightarrow}0
\]
For $j\leq\frac{n}{2t}$ and $k\leq\frac{n}{2t}$ of $k>\frac{n}{t}$,
we have $\mathbb{E}\mathcal{L}_{EPStrip}^{f_{1}}(x_{1},x_{j},x_{k})=\mathbb{E}\mathcal{L}_{EPStrip}^{f_{2}}(x_{1},x_{j},x_{k})$.
Hence, the only case left is $j\leq\frac{n}{2t}$ and $\frac{n}{2t}<k\leq\frac{n}{t}$.
In this case: $\mathbb{E}\mathcal{L}_{EPStrip}^{f_{2}}(x_{1},x_{j},x_{k})=0$,
where \[\mathbb{E}\mathcal{L}_{EPStrip}^{f_{1}}(x_{1},x_{j},x_{k})=(p^{2}\cdot q^{j-1}+q^{2}\cdot p^{j-1})\cdot\alpha\geq q^{j+1}\alpha\] and we get:
\begin{align*}
 & \frac{1}{n}\cdot\sum_{j\leq\frac{n}{2t},\frac{n}{2t}\leq k\leq\frac{n}{t}}\mathbb{E}\mathcal{L}_{EPStrip}^{f_{1}}(x_{1},x_{j},x_{k})-\mathbb{E}\mathcal{L}_{EPStrip}^{f_{2}}(x_{1},x_{j},x_{k})\geq\\
 & \alpha\cdot q^{2}\cdot\sum_{j=0}^{\frac{n}{2t}}q^{i}=\alpha\cdot q^{2}\cdot\frac{1-q^{n/2t}}{1-q}\stackrel{n\rightarrow\infty}{\rightarrow}\alpha q^{2}\cdot\frac{1}{1-q}
\end{align*}
Choosing $M$ such that \[\alpha\cdot q^{2}\cdot\frac{1-q^{M/2t}}{1-q}>4\cdot\alpha p^{\frac{M}{4}}\cdot\frac{1-q^{M(2t-1)/2t}}{1-q}\]
will satisfy that for every $n>M$: \[\frac{1}{n}\sum_{1\leq i,j,k\leq n}\mathbb{E}\mathcal{L}_{EPStrip}^{f_{2}}(x_{i},x_{j},x_{k})<\frac{1}{n}\sum_{1\leq i,j,k\leq n}\mathbb{E}\mathcal{L}_{EPStrip}^{f_{1}}(x_{i},x_{j},x_{k})\]
\end{proof}
\begin{claim}{}
There exists $M$ such that if $n\geq M$ then: \[\frac{1}{n}\sum_{1\leq i,j\leq n}\mathbb{E}\mathcal{L}_{EPSmargin}^{f_{2}}(x_{i},x_{j})<\frac{1}{n}\sum_{1\leq i,j\leq n}\mathbb{E}\mathcal{L}_{EPSmargin}^{f_{1}}(x_{i},x_{j})\]
\end{claim}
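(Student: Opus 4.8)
The plan is to mirror the structure of the proof of Claim 1, but now for the margin loss with EPS, exploiting the fact that the negative part of $\mathcal{L}_{EPSmargin}$ is identical for $f_1$ and $f_2$ (both embeddings place $A_1$ and $A_1^c$ at distance $\alpha$, and within-modality vs cross-class distances are unchanged), so all the action is in the positive term $\mathbb{E}(\bar{\Phi}(Y_i,Y_j)\cdot\bar{\delta}_{Y_i,Y_j})\cdot(D^f_{x_i,x_j}-\beta_{x_i}+\alpha)_+$. First I would fix an anchor $x_i\in A_1$ and set $\beta_{x_i}=\alpha$ (as in Theorem 2, this is the optimal choice and makes $(D-\beta+\alpha)_+ = D$ whenever $D\le 2\alpha$), so that for $f_1$ the contribution of a same-class partner $x_j$ at embedding-distance $0$ is $0$, while the real cost comes from the event that the EPS-selected positive is actually a cross-class sample — which happens exactly when all closer same-(noisy)-label candidates are absent. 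For anchors outside $A_1$ both embeddings agree, so only anchors in $A_1$ (equivalently the first $n/t$ indices) need to be analysed, and by symmetry of the two sub-clusters it suffices to fix $x_1$.

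Next I would reuse the tail estimate from Claim 1: ordering the points by embedding-distance to $x_1$, the probability that the first $j-1$ candidates all have noisy label $\neq Y_1$ is at most $2p^{j-1}$, so $\mathbb{E}\mathcal{L}_{EPSmargin}^{f_2}(x_1,x_j) \le 2\alpha p^{j-1}$ once $j$ exceeds the size of the first sub-cluster $n/2t$; summing over $j$ gives a bound of the form $C\cdot\alpha p^{n/2t}/(1-q)\to 0$ as $n\to\infty$, exactly as before. Then the key sign comes from the range $j\le n/2t$ with partner index $k$ in the \emph{other} sub-cluster of $A_1$, i.e. $n/2t<k\le n/t$: under $f_2$ these two sub-clusters are at distance $\alpha$, so $\bar{\Phi}$ never selects such a $k$ as positive (a same-modality point at distance $0$ is always closer), giving contribution $0$; under $f_1$ everything in $A_1$ is at distance $0$, so with the probability that $x_1$'s nearest genuine same-class neighbour among the first $k$ candidates lies across the modality boundary — a probability bounded below by something like $q^{j+1}$ — we pay roughly $\alpha$. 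Summing these positive differences over $j\le n/2t$ yields a lower bound $\ge \alpha q^{2}\cdot\frac{1-q^{n/2t}}{1-q}$, which converges to the positive constant $\alpha q^2/(1-q)$.

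Finally I would combine the two estimates: the per-anchor difference $\frac1n\sum_{j,k}\big(\mathbb{E}\mathcal{L}_{EPSmargin}^{f_1}(x_1,x_j)-\mathbb{E}\mathcal{L}_{EPSmargin}^{f_2}(x_1,x_j)\big)$ is bounded below by (a bounded-away-from-zero term) minus (a term going to $0$), hence positive for all $n$ larger than some $M$; choosing $M$ so that $\alpha q^2\frac{1-q^{M/2t}}{1-q} > C\alpha p^{M/2t}\frac{1}{1-q}$ closes the argument, and summing over the $n/t$ anchors in $A_1$ (the remaining anchors contribute zero difference) gives the claimed strict inequality. The main obstacle I anticipate is bookkeeping the EPS indicator $\bar{\Phi}$ correctly in the margin setting: unlike the triplet loss, the margin loss keeps the negative term for \emph{every} cross-class pair (not just the EPS-selected one), so I must be careful to verify that this negative term genuinely cancels between $f_1$ and $f_2$ and that the only asymmetry is in which same-class point gets picked as the easy positive; a secondary subtlety is the infinitesimal tie-breaking order inside the sub-clusters (flagged in the Remark), which I would handle exactly as in Claim 1 by treating within-cluster distances as $0$ up to negligible constants.
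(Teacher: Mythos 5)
There is a genuine gap: your central premise — that the negative part of $\mathcal{L}_{EPSmargin}$ is identical for $f_1$ and $f_2$ so that ``all the action is in the positive term'' — is false, and it inverts the sign analysis. The pairs that distinguish $f_1$ from $f_2$ are exactly those with $x_i,x_j\in A_1$ but in different sub-clusters: with probability $\approx 2pq$ their \emph{noisy} labels differ, so the negative term $(\beta_{x_i}-D_{x_i,x_j}+\alpha)_+$ is charged, and it equals $\beta_{x_i}+\alpha$ under $f_1$ (distance $0$) but only $\beta_{x_i}$ under $f_2$ (distance $\alpha$). This non-vanishing saving of $2pq\alpha$ per cross-sub-cluster pair is the entire source of the strict inequality in the paper's proof. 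Conversely, on these same pairs the EPS positive term actually favors $f_1$, not $f_2$: with your choice $\beta_{x_i}=\alpha$ the positive cost at distance $0$ is $(0-\alpha+\alpha)_+=0$ under $f_1$, while under $f_2$ it is $(2\alpha-\alpha)_+=\alpha$, so your claim that under $f_1$ ``we pay roughly $\alpha$'' when the easy positive falls across the modality boundary is backwards. If the negative terms really cancelled and only the positive terms differed, the inequality would come out in the \emph{wrong} direction; the paper's argument works precisely because the $f_2$ disadvantage in the positive term is weighted by the exponentially small probability $p^2q^{j-2}+q^2p^{j-2}$ that a point ranked beyond $n/2t$ is EPS-selected, which is dominated by the constant $2pq\alpha$ gain in the negative term once $n$ exceeds a threshold $M$.

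Two secondary problems follow from the same confusion. First, your tail bound $\mathbb{E}\mathcal{L}_{EPSmargin}^{f_2}(x_1,x_j)\le 2\alpha p^{j-1}$ for $j>n/2t$ cannot hold, because the negative term $\mathbb{E}(1-\bar{\delta}_{Y_1,Y_j})\cdot(\beta_{x_1}-D+\alpha)_+\approx 2pq\,\beta_{x_1}$ does not involve $\bar{\Phi}$ and does not decay in $j$; only the EPS positive part admits that estimate, and the non-decaying negative parts must be compared between $f_1$ and $f_2$ rather than bounded away. Second, the double index $(j,k)$ with a ``partner in the other sub-cluster'' is a leftover from the triplet argument of Claim~1; the margin loss is pairwise, and the correct bookkeeping is a single ranked index $j$ with three ranges ($j\le n/2t$, $n/2t<j\le n/t$, $j>n/t$), of which only the middle range contributes a nonzero difference. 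Your overall strategy (fix an anchor in $A_1$, order by distance, isolate the middle range, compare a constant gain against an exponentially small loss) matches the paper's skeleton, but with the roles of the positive and negative terms swapped the argument as written does not establish the claimed inequality.
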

\begin{proof}
For every $1\leq j\leq\frac{n}{2t}$ or $\frac{n}{t}<j\leq n$
we have: \[\mathbb{E}\mathcal{L}_{EPSmargin}^{f_{1}}(x_{i},x_{j})=\mathbb{E}\mathcal{L}_{EPSmargin}^{f_{1}}(x_{i},x_{j})\]
For $\frac{n}{2t}<j\leq\frac{n}{2}$: \[\mathbb{E}\mathcal{L}_{EPSmargin}^{f_{2}}(x_{i},x_{j})=2\cdot p\cdot q\cdot\beta_{x_{i}}+(p^{2}q^{j-2}+q^{2}p^{j-2})\cdot(2\cdot\alpha-\beta_{x_{i}})\]
while: \[\mathbb{E}\mathcal{L}_{EPSmargin}^{f_{2}}(x_{i},x_{j})=2\cdot p\cdot q\cdot(\beta_{x_{i}}+\alpha)+(p^{2}q^{j-2}+q^{2}p^{j-2})\cdot(\alpha-\beta_{x_{i}})\]
Since $j>\frac{n}{2t}$ the second therm tend to zero. Therefore,
taking M such that \[2qp>(p^{2}q^{\frac{M}{2t}-2}+q^{2}p^{\frac{M}{2t}-2})\]
will satisfy that for each $n\geq M$ \[\frac{1}{n}\sum_{1\leq i,j\leq n}\mathbb{E}\mathcal{L}_{EPSmargin}^{f_{2}}(x_{i},x_{j})<\frac{1}{n}\sum_{1\leq i,j\leq n}\mathbb{E}\mathcal{L}_{EPSmargin}^{f_{1}}(x_{i},x_{j})\]
\end{proof}
In the previous two claims we prove that the class collapsing solution is
not minimal with respect to both the $EPSmargin$ and the $EPStriplet$.
In the following claims we prove that not only it is not the minimal solution, looking
locally on the direct effect of the EPS losses on a sample which is not one of the closest elements to to the anchor. We prove that the optimal solution in this case is an embedding in which the distance between the sample to the anchor is equal to the
margin hyperparameter.
\begin{claim}{}
Let $f$ be an embedding. For every $i$, let $i_{1},..,i_{n}$
be such that $D_{x_{i},x_{i_{1}}}^{f}<D_{x_{i},x_{i_{2}}}^{f}<...<D_{x_{i},x_{n}}^{f}$,
Then there exists $M$ such that for every $j>M$ the minimal embedding
for $\mathbb{E}\mathcal{L}_{EPSmargin}^{f}(x_{i},x_{j})$ is achived whenever
$D_{x_{i},x_{j}}^{f}=\beta_{x_{i}}+\alpha$.
\end{claim}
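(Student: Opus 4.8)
The plan is to collapse the statement to a one-dimensional convex optimization. Holding everything in $f$ fixed except $D:=D^{f}_{x_{i},x_{j}}$, and in particular keeping the rank of $x_{j}$ in the $x_{i}$-distance order fixed (so the set of points closer to $x_{i}$ than $x_{j}$, hence the random variable $\bar\Phi(Y_{i},Y_{j})$, does not change), the term $\mathbb{E}\mathcal{L}_{EPSmargin}^{f}(x_{i},x_{j})$ is a function of $D$ alone, namely
\[
g(D)\;=\;c_{1}\,(D-\beta_{x_{i}}+\alpha)_{+}\;+\;c_{2}\,(\beta_{x_{i}}-D+\alpha)_{+},
\]
with $c_{1}:=\mathbb{E}\big(\bar\Phi(Y_{i},Y_{j})\,\bar\delta_{Y_{i},Y_{j}}\big)=\mathbb{E}\,\bar\Phi(Y_{i},Y_{j})$ (the two agree because $\bar\Phi=1$ already forces $Y_{i}=Y_{j}$) and $c_{2}:=\mathbb{E}(1-\bar\delta_{Y_{i},Y_{j}})=\mathbb{P}(Y_{i}\neq Y_{j})$. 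This $g$ is convex and piecewise linear with slopes $-c_{2}$, $c_{1}-c_{2}$, $c_{1}$ on $(-\infty,\beta_{x_{i}}-\alpha]$, $[\beta_{x_{i}}-\alpha,\beta_{x_{i}}+\alpha]$, $[\beta_{x_{i}}+\alpha,\infty)$; when $c_{1},c_{2}>0$ these slopes are strictly increasing, so $g$ has a unique minimizer, equal to $\beta_{x_{i}}+\alpha$ precisely when $c_{1}<c_{2}$ (it would be $\beta_{x_{i}}-\alpha$ if instead $c_{1}>c_{2}$). So the whole claim reduces to establishing $0<c_{1}<c_{2}$ once $j$ is large.

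For the lower bound on $c_{2}$: $Y_{i}$ and $Y_{j}$ are independent, and each of their laws puts mass $p$ on one value and $q':=\frac{1-p}{t-1}$ on each of the other $t-1$ values, so a rearrangement estimate gives $\mathbb{P}(Y_{i}=Y_{j})=\sum_{r}\mathbb{P}(Y_{i}=r)\mathbb{P}(Y_{j}=r)\leq p^{2}+(t-1)\big(\tfrac{1-p}{t-1}\big)^{2}=p^{2}+\tfrac{(1-p)^{2}}{t-1}<1$, hence $c_{2}\geq c_{0}:=1-p^{2}-\tfrac{(1-p)^{2}}{t-1}>0$, a constant independent of $j$. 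For the upper bound on $c_{1}$: since $x_{j}$ is the $j$-th nearest point to $x_{i}$, there are exactly $j-2$ points strictly between $x_{i}$ and $x_{j}$ other than the anchor itself, and $\bar\Phi(Y_{i},Y_{j})=1$ forces every one of them to carry a label different from $Y_{i}$; conditioning on the value of $Y_{i}$ and using independence, each such event has probability at most $1-q'$, so $c_{1}\leq(1-q')^{\,j-2}$. The same conditioning, bounding the relevant single-coordinate probabilities below by $q'$ and by $1-p$, gives $c_{1}>0$. Since $0<1-q'<1$, choose $M$ with $(1-q')^{M-2}<c_{0}$; then $0<c_{1}\leq(1-q')^{\,j-2}<c_{0}\leq c_{2}$ for every $j>M$, which is exactly the hypothesis the one-variable analysis needs.

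The slope bookkeeping for $g$ and the elementary estimates on $c_{1}$ and $c_{2}$ are routine. The step that needs care is the precise reading of $\bar\Phi$: the quantifier ``$\forall t$'' must be understood to exclude $t=i$ (the anchor is not a candidate positive for itself), and one must count exactly $j-2$ strictly-closer non-anchor points so that the geometric bound $(1-q')^{\,j-2}$, and therefore the threshold $M$, are calibrated correctly — this is also the only place where the hypothesis $j>M$ enters. A minor caveat, which the statement implicitly grants, is that $\beta_{x_{i}}+\alpha$ should be attainable by $D$ without disturbing the order used to define $c_{1}$; as in the earlier proofs, the infinitesimal separations between sub-clusters are treated as negligible.
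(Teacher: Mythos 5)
Your proposal is correct and follows essentially the same route as the paper's proof: both reduce the term to a one-variable piecewise-linear function of $D^f_{x_i,x_j}$, show the positive coefficient $\mathbb{E}(\bar\Phi\,\bar\delta)$ decays geometrically in $j$ (your bound $(1-q')^{j-2}$ versus the paper's $4p^{j-1}$ imported from Claim 1) while $\mathbb{E}(1-\bar\delta)$ stays bounded below by a $j$-independent constant (your $1-p^2-\tfrac{(1-p)^2}{t-1}$ versus the paper's $2pq$), and conclude the minimizer sits at $D=\beta_{x_i}+\alpha$. Your explicit slope/convexity bookkeeping and the remark that $c_1>0$ merely make rigorous what the paper asserts when it restricts to $|\beta_{x_i}-D|\leq\alpha$, so the substance is the same.
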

\begin{proof}
Fix $x_{1}$, as in the previous claims we will assume: \[D_{x_{1},x_{1}}^{f}<D_{x_{1},x_{2}}^{f}<...<D_{x_{1},x_{n}}^{f}\]
As was prove in in Claim 1 $\mathbb{P}(\bigwedge_{1\leq t<j}Y_{i}\neq Y_{t})\leq4p^{j}$,
thus \[\mathbb{E}(\bar{\Phi}(Y_{i},Y_{j})\cdot\bar{\delta}_{Y_{i},Y_{j}})\leq \mathbb{P}(\bigwedge_{1\leq t<j}Y_{i}\neq Y_{t})\leq4p^{j-1}\stackrel{j\rightarrow\infty}{\rightarrow}0\]
Since the minimal solution for \[\mathbb{E}(\bar{\Phi}(Y_{i},Y_{j})\cdot\bar{\delta}_{Y_{i},Y_{j}})\cdot(D_{x_{i},x_{j}}^{f}-\beta_{x_{i}}+\alpha)_{+}+\mathbb{E}(1-\bar{\delta}_{Y_{i},Y_{j}})\cdot(\beta_{x_{i}}-D_{x_{i},x_{j}}^{f}+\alpha)_{+}\]
 satisfies $|\beta_{x_{i}}-D_{x_{i},x_{j}}^{f}|\leq\alpha$, we have:
 \begin{align*}
 & \mathbb{E}\mathcal{L}_{EPSmargin}^{f_{1}}(x_{1},x_{j})=\alpha\cdot(\mathbb{E}(\bar{\Phi}(Y_{1},Y_{j})\cdot\bar{\delta}_{Y_{1},Y_{j}})+\mathbb{E}(1-\bar{\delta}_{Y_{1},Y_{j}}))+\\
 & (D_{x_{1},x_{j}}^{f}-\beta_{x_{1}})\cdot(\mathbb{E}(\bar{\Phi}(Y_{1},Y_{j})\cdot\bar{\delta}_{Y_{1},Y_{j}})-\mathbb{E}(1-\bar{\delta}_{Y_{i},Y_{j}}))
\end{align*}
Since $\mathbb{E}(1-\bar{\delta}_{Y_{1},Y_{j}})\geq2pq$, there
exists $M$ such every $j>M$ satisfies \[(\mathbb{E}(\bar{\Phi}(Y_{1},Y_{j})\cdot\bar{\delta}_{Y_{1},Y_{j}})-\mathbb{E}(1-\bar{\delta}_{Y_{i},Y_{j}}))<0\] Therefore the minimal value is achieved whenever $D_{x_{1},x_{j}}^{f}=\alpha+\beta_{x_{1}}$.
\end{proof}

The proof in the EPStriplet loss case is similar.
\begin{claim}{}
Let $f$ be an embedding. For every $i$, let $i_{1},,..,i_{n}$
be such that $D_{x_{i},x_{i_{2}}}^{f}<...<D_{x_{i},x_{n}}^{f}$.
Then there exists $M$ such that for every $j>M$ the minimal embedding
for: \[\mathbb{E}\mathcal{L}_{EPStrip}^{f}(x_{i},x_{t},x_{t+j})+\mathbb{E}\mathcal{L}_{EPStrip}^{f}(x_{i},x_{t+j},x_{t})\] is achieved whenever $D_{x_{i},x_{t+j}}=D_{x_{i},x_{t}}+\alpha$.
\end{claim}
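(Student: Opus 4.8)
The plan is to follow the template of the proof of Claim 3, turning the statement into a one–dimensional piecewise–linear minimization once two coefficients have been controlled. Fix $x_i$ and, exactly as in the proofs of Claims 1 and 3, assume without loss of generality that the samples are re-indexed so that $D^f_{x_i,x_1}<D^f_{x_i,x_2}<\cdots<D^f_{x_i,x_n}$; write $w:=D^f_{x_i,x_t}$ and $h:=D^f_{x_i,x_{t+j}}$, so that $w<h$. Using the definition of $\mathbb{E}\mathcal{L}_{EPStrip}^f$ and the fact that $\bar\Phi(Y,Z)$ already forces $Y=Z$ (so the accompanying $\bar\delta$ factor is redundant), the quantity to be minimized is
\[
\mathbb{E}\mathcal{L}_{EPStrip}^{f}(x_{i},x_{t},x_{t+j})+\mathbb{E}\mathcal{L}_{EPStrip}^{f}(x_{i},x_{t+j},x_{t}) \;=\; c_1\,(w-h+\alpha)_+ \;+\; c_2\,(h-w+\alpha)_+,
\]
where $c_1:=\mathbb{E}\big(\bar\Phi(Y_i,Y_t)\,(1-\bar\delta_{Y_i,Y_{t+j}})\big)$ is the weight on the term with $x_t$ as positive and $x_{t+j}$ as negative, and $c_2:=\mathbb{E}\big(\bar\Phi(Y_i,Y_{t+j})\,(1-\bar\delta_{Y_i,Y_t})\big)$ is the weight on the term with the roles swapped.

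The engine of the proof is the asymmetry between $c_1$ and $c_2$. Since $D^f_{x_i,x_t}<D^f_{x_i,x_{t+j}}$, the event $\bar\Phi(Y_i,Y_{t+j})$ already forces $Y_t\neq Y_i$ (it forces $Y_s\neq Y_i$ for \emph{every} $s$ closer to $x_i$ than $x_{t+j}$, and $x_t$ is one such $s$); hence $c_2=\mathbb{E}\big(\bar\Phi(Y_i,Y_{t+j})\big)\le\mathbb{P}\big(\bigwedge_{s<t+j}Y_s\neq Y_i\big)$, which by the geometric estimate established in the proof of Claim 1 decays like $p^{\,t+j}$ and therefore tends to $0$ as $j\to\infty$. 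For $c_1$, note that $Y_{t+j}$ is \emph{not} among the variables on which $\bar\Phi(Y_i,Y_t)$ depends (those are $Y_i,Y_t$ and the $Y_s$ with $D^f_{x_i,x_s}<D^f_{x_i,x_t}$), so conditioning on $Y_i$ and using $\mathbb{P}(Y_{t+j}=r)\le p$ for every $r$ gives $c_1=\mathbb{E}\big[\mathbf{1}_{\bar\Phi(Y_i,Y_t)}\,\mathbb{P}(Y_{t+j}\neq Y_i\mid Y_i)\big]\ge(1-p)\,\mathbb{P}\big(\bar\Phi(Y_i,Y_t)\big)$, and $\mathbb{P}(\bar\Phi(Y_i,Y_t))>0$ is a fixed positive constant (a product of finitely many strictly positive marginals) that does not depend on $j$. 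Hence there is an $M$ with $c_1>c_2>0$ for all $j>M$.

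For such $j$ I would finish with the elementary minimization of $g(h):=c_1(w-h+\alpha)_+ + c_2(h-w+\alpha)_+$ over $h\ge 0$: $g$ is affine with slope $-c_1<0$ on $[0,w-\alpha]$ (empty if $w<\alpha$), affine with slope $-c_1+c_2<0$ on $[\,\max(0,w-\alpha),\,w+\alpha\,]$, and affine with slope $+c_2>0$ on $[w+\alpha,\infty)$; so $g$ is strictly decreasing up to $h=w+\alpha$ and strictly increasing thereafter, and the unique minimizer is $h=w+\alpha$, i.e. $D^f_{x_i,x_{t+j}}=D^f_{x_i,x_t}+\alpha$. As in Claim 3 this is a statement about the \emph{direct} effect of the EPS loss: the coefficients $c_1,c_2$, which only encode the combinatorial "who-is-closer-to-$x_i$" structure, are held fixed while the single distance $D^f_{x_i,x_{t+j}}$ is varied.

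I expect the main obstacle to be the bookkeeping around $\bar\Phi$ rather than any real calculation: one must correctly see that $\bar\Phi(Y_i,Y_{t+j})$ absorbs the $(1-\bar\delta_{Y_i,Y_t})$ factor (so $c_2$ inherits the whole product of "different label" constraints and decays), whereas $\bar\Phi(Y_i,Y_t)$ does \emph{not} absorb $(1-\bar\delta_{Y_i,Y_{t+j}})$, and to justify the independence used to bound $c_1$ below and the positivity of $\mathbb{P}(\bar\Phi(Y_i,Y_t))$. Everything downstream of the two coefficient estimates is the same piecewise-linear argument already used for the margin loss in Claim 3.
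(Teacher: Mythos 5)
Your proposal is correct and follows essentially the same route as the paper's proof: you split the sum into the same two coefficients (the paper's $K(Y_1,Y_t,Y_{t+j})$ and $K(Y_1,Y_{t+j},Y_t)$), show the first is bounded below by a $j$-independent positive constant while the second is dominated by $\mathbb{P}(\bigwedge_{s<t+j}Y_s\neq Y_i)$ and decays geometrically in $j$, and then minimize the resulting piecewise-linear function of $D^f_{x_i,x_{t+j}}$ to land at $D^f_{x_i,x_{t+j}}=D^f_{x_i,x_t}+\alpha$. Your only deviations are cosmetic improvements: you lower-bound the first coefficient by conditioning on the independence of $Y_{t+j}$ rather than writing out the explicit product (the paper's $q^{t+1}$ bound), and you carry out the full three-piece slope analysis instead of first asserting that the minimum lies in the band $|D^f_{x_i,x_{t+j}}-D^f_{x_i,x_t}|\leq\alpha$.
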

\begin{proof}
Define $K(Y_{i},Y_{i},Y_{k}):=\mathbb{E}\left(\bar{\Phi}(Y_{i},Y_{j})\cdot\bar{\delta}_{Y_{i},Y_{j}}\cdot(1-\bar{\delta}_{Y_{i},Y_{k}})\right)$.
Fixing $x_{1}$, assuming $D_{x_{1},x_{1}}^{f}<D_{x_{1},x_{2}}^{f}<...<D_{x_{1},x_{n}}^{f}$,  We have:
\begin{align*}
 & \mathbb{E}\mathcal{L}_{EPStrip}^{f}(x_{1},x_{t},x_{t+j})+\mathbb{E}\mathcal{L}_{EPStrip}^{f}(x_{1},x_{t+j},x_{t})=K(Y_{1},Y_{t},Y_{t+j})\cdot\left(D_{x_{1},x_{t}}^{f}-D_{x_{1},x_{t+j}}^{f}+\alpha\right)_{+}\\
 & +K(Y_{1},Y_{t+j},Y_{t})\cdot\left(D_{x_{1},x_{t+j}}^{f}-D_{x_{1},t}^{f}+\alpha\right)_{+}
\end{align*}
As in the previous claim, the minimal value is achieved whenever
$|D_{x_{1},x_{t+j}}^{f}-D_{x_{1},x_{t}}^{f}|\leq\alpha$ in this case: \begin{align*}
 & \mathbb{E}\mathcal{L}_{EPStrip}^{f}(x_{1},x_{t},x_{t+j})+\mathbb{E}\mathcal{L}_{EPStrip}^{f}(x_{1},x_{t+j},x_{t})=\alpha\cdot(K(Y_{1},Y_{t},Y_{t+j})+K(Y_{1},Y_{t+j},Y_{t}))Y_{t}))+\\
 & (D_{x_{1},x_{t}}^{f}-D_{x_{1},x_{t+j}}^{f})\cdot(K(Y_{1},Y_{t},Y_{t+j})-K(Y_{1},Y_{t+j},Y_{t}))
\end{align*}
On the one hand: $K(Y_{1},Y_{t},Y_{t+j})=(\prod_{i\in\{1,2,..,t,t+j\}}p^{t_{i}}q^{1-t_{i}})+(\prod_{i\in\{1,2,..,t,t+j\}}p^{1-t_{i}}q^{t_{i}})\geq q^{t+1}$
where $t_{k}=\begin{cases}
1 & Y_{k}\notin A\\
0 & else
\end{cases}$ for $k\in\{2,..,t-1,t+j\}$ and $t_{k}=\begin{cases}
1 & Y_{k}\in A\\
0 & else
\end{cases}$for $k\in\{1,t\}$. On the other hand $K(Y_{1}Y_{t+j},Y_{t})\geq Prob(\bigwedge_{1\leq k<t+j}Y_{1}\neq Y_{k})\leq4p^{t+j-1}$.
Taking $j$ large enough such that $q^{t+1}\leq4p^{t+j-1}$, we have:
\[(\mathbb{E}\left(\bar{\Phi}(Y_{1},Y_{t})\cdot\bar{\delta}_{Y_{1},Y_{t}}\cdot(1-\bar{\delta}_{Y_{1},Y_{t+j}})\right)-\mathbb{E}\left(\bar{\Phi}(Y_{1},Y_{t+j})\cdot\bar{\delta}_{Y_{1},Y_{t+j}}\cdot(1-\bar{\delta}_{Y_{1},Y_{t}})\right))>0\]
therefore in such case the minimum is archived whenever $D_{x_{1},x_{t+j}}^{f}=D_{x_{1},x_{t}}^{f}+\alpha$.

\end{proof}

\subsection*{C:\quad  More experiments and implementation details}

  

\subsubsection* {MNIST architecture details}
For the MNIST even/odd experiment we use a model consisting of two consecutive convolutions layer with (3,3) kernels and 32,64 (respectively) filter sizes. The two layers are followed by Relu activation and batch normalization layer, then there is a (2,2) max-pooling follows by 2 dense layers with 128 and 2 neurons respectively.


\begin{table}[!t]
\centering

\tablestyle{3.75pt}{1.1}
\begin{tabular}{l|l|c|x{30}}
\multicolumn{1}{c|}{\multirow{2}{*}{dataset}} & \multicolumn{1}{c|}{\multirow{2}{*}{model}} & \multicolumn{1}{c|}{Without EPS} & \multicolumn{1}{c}{With EPS} \\
& & std & std  \\
\shline
cars196 & Margin & 0.17 & 0.27 \\
cars196 & MS & 0.24 & 0.29 \\
cars196 & Trip+SH & 0.20 & 0.47 \\
\hline
cub200 & Margin & 29.8 & 0.33 \\
cub200 & MS & 0.43 & 0.36 \\
cub200 & Trip+SH & 0.52 & 0.35 \\
\hline
Omniglot-letters & Margin & 0.73 & 0.58 \\
Omniglot-letters & MS & 0.52 & 0.71 \\
Omniglot-letters & Trip+SH & 0.34 & 0.61 \\
\end{tabular}

\caption{Std of Recall@1 results. Each model was trained 8 times with different random seeds.}
\label{Std_table}
\end{table}

\begin{table}[!t]
\centering
\tablestyle{3.75pt}{1.1}
\begin{tabular}{l|x{30}x{30}|x{30}x{30}|x{30}x{30}}
\multicolumn{1}{c|}{\multirow{2}{*}{}} & \multicolumn{2}{c|}{Cars196} & \multicolumn{2}{c|}{CUB200} \\
& MS & MS+EPS & MS & MS+EPS  \\
\shline
R@1 & 84.1 & \textbf{85.5} & 65.7 & \textbf{66.7}  \\
R@2 & 90.4 & \textbf{90.7} & 77.0 & \textbf{77.2}   \\
R@4 & 94.0 & \textbf{94.3} & 86.3 & \textbf{86.4}  \\
R@8 & 96.5 & \textbf{96.7} & \textbf{91.2} & 90.9   \\
\end{tabular}
\caption{Results of Multi-similarity loss with Embedding size 512 (as in ~\cite{wang2019multi}). Using EPS improve results in both cases. }
\label{MS-table}

\end{table}

\subsubsection* {Stability analysis}
Following \cite{musgrave2020metric,1911.12528}, it was important to us to have a fair comparison between all tested models. Therefore, for all the experiments we use the same framework as in~\cite{Roth2020}, with the same architecture and embedding size (128). We also did not change the default hyper-parameters in all tested methods. We run each experiment 8 times with different random seeds, the reported results are the mean of all the experiments. The std of the Recall@1 results of all experiments can be seen in Table \ref{Std_table}. In all cases the differences between the results with and without the EPS are significance.

\subsubsection* {Multi-similarity comparison}
From our experiments, the Multi-similarity loss is highly affected by the batch size. Using Resnet50 backbone, we restrict the number of batch size to 160 for all tested model, which cause to the inferior results of the multi-similarity loss comparing to other methods. For the sake of completeness we provide the results also on inception backbone with embedding size of 512 as in~\cite{wang2019multi}, and batch size of 260. As can be seen in Table \ref{MS-table}, also in these cases the results improve when using EPS instead of semi-hard sampling on the positive samples.




\end{document}